\documentclass{article}
\usepackage{enumitem}
\usepackage{microtype}
\usepackage{graphicx}
\usepackage{booktabs} 
\usepackage{subcaption}
\usepackage{multirow}

\usepackage{hyperref}

\usepackage[accepted]{icml2026}

\usepackage{amsmath}
\usepackage{amssymb}
\usepackage{mathtools}
\usepackage{amsthm}

\usepackage[capitalize,noabbrev]{cleveref}
\crefname{appendix}{Appendix}{Appendices}
\Crefname{appendix}{Appendix}{Appendices}
\theoremstyle{plain}
\newtheorem{theorem}{Theorem}[section]

\newtheorem{lemma}[theorem]{Lemma}

\theoremstyle{definition}

\theoremstyle{remark}

\usepackage[textsize=tiny]{todonotes}

\icmltitlerunning{NITP: Next Implicit Token Prediction for LLM Pre-training}

\begin{document}

\twocolumn[
\icmltitle{NITP: Next Implicit Token Prediction for LLM Pre-training}





\begin{icmlauthorlist}
  \icmlauthor{Xiangdong Zhang}{sai,xhs}
  \icmlauthor{Debing Zhang}{xhs}
  \icmlauthor{Shaofeng Zhang}{ustc}
\end{icmlauthorlist}
\begin{icmlauthorlist}
  \icmlauthor{Xiaohan Qin}{sai,xhs}
  \icmlauthor{Yu Cheng}{cuhk}
  \icmlauthor{Junchi Yan}{sai}
\end{icmlauthorlist}

\icmlaffiliation{sai}{School of AI, Shanghai Jiao Tong University}
\icmlaffiliation{xhs}{Dots Studio, Xiaohongshu Inc.}
\icmlaffiliation{cuhk}{The Chinese University of Hong Kong}
\icmlaffiliation{ustc}{University of Science and Technology of China}

\icmlcorrespondingauthor{Junchi Yan}{yanjunchi@sjtu.edu.cn}

\icmlkeywords{Machine Learning, ICML}

\vskip 0.3in
]



\printAffiliationsAndNotice{}  

\begin{abstract}
Standard Next-Token Prediction (NTP) supervises language models solely through discrete labels in the output logit space.
We argue that this sparse, one-hot supervision leaves the latent representation space under-constrained, allowing hidden states to drift into degenerate and anisotropic configurations that limit generalization.
To address this issue, we propose \textbf{Next Implicit Token Prediction (NITP)}, which augments discrete prediction with dense, continuous supervision directly in the representation space.
NITP requires the model to predict the implicit semantic content of the next token, using shallow-layer representations from the same model as stable self-supervised targets.
Theoretically, we show that NITP regularizes the optimization landscape by mitigating under-constrained degrees of freedom and enforcing a compact, structured representation geometry.
Empirically, across dense and MoE models ranging from 0.5B to 9B parameters, NITP consistently \textbf{improves downstream performance with negligible computational overhead}. Notably, on the 9B MoE model, NITP achieves a \textbf{5.7\%} absolute improvement on MMLU-Pro,
along with gains of \textbf{6.4\%} on C3 and \textbf{4.3\%} on CommonsenseQA, with \textbf{$\sim$2\%} additional training FLOPs and no additional inference cost. Our implementation is available at \url{https://github.com/aHapBean/NITP}.
\end{abstract}

\section{Introduction}
\label{sec:intro}

Large language models (LLMs) have achieved remarkable success through large-scale pre-training with the Next-Token Prediction (NTP)~\cite{liu2024deepseek,yang2025qwen3,team2025kimi,he2025law,chen2024nexttokenprediction}.
By maximizing the likelihood of the next token over massive corpora, this paradigm learns general-purpose representations that support a wide range of downstream tasks~\cite{guo2025deepseekr1}.
Although subsequent fine-tuning stages further adapt these models, their effectiveness critically depends on the representations learned during base pre-training~\cite{yue2025doesRLbasemodel}.
As a result, the pre-training objective plays a central role in determining the capability of LLMs.

Despite its empirical success, standard NTP provides supervision only through discrete one-hot targets in the output token space.
While gradients propagate to hidden states via the output projection, the objective primarily constrains representations along the target logit direction, leaving many degrees of freedom in the latent
space that remain weakly constrained. This raises a fundamental question: \textbf{Does next-token prediction alone sufficiently supervise the geometry of hidden representations?}

\begin{figure}[tb!]
    \centering
    \begin{subfigure}[t]{0.49\linewidth}
        \centering
        \includegraphics[width=\linewidth]{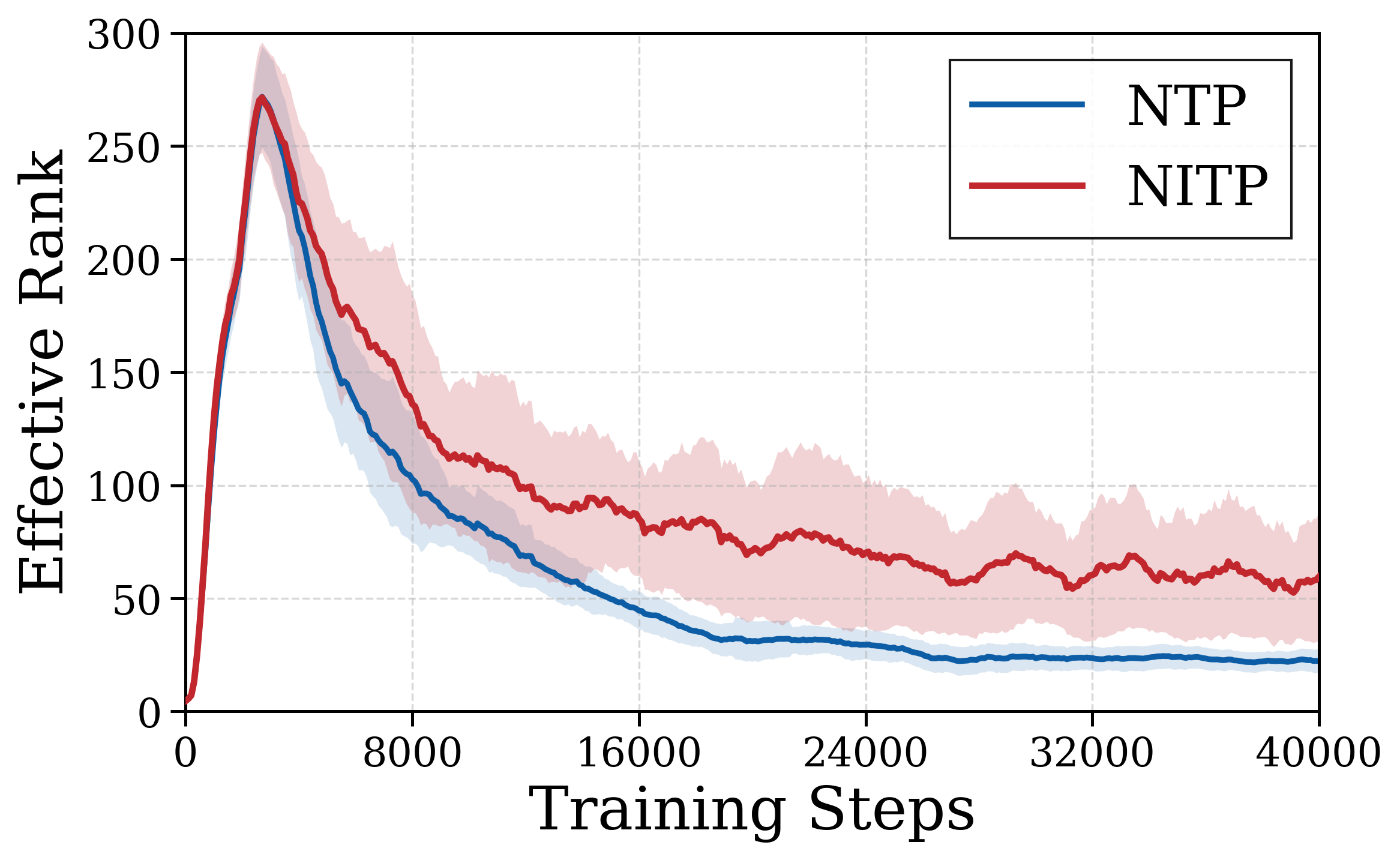}
        \caption{Effective rank}
    \end{subfigure}
    \hfill
    \begin{subfigure}[t]{0.49\linewidth}
        \centering
        \includegraphics[width=\linewidth]{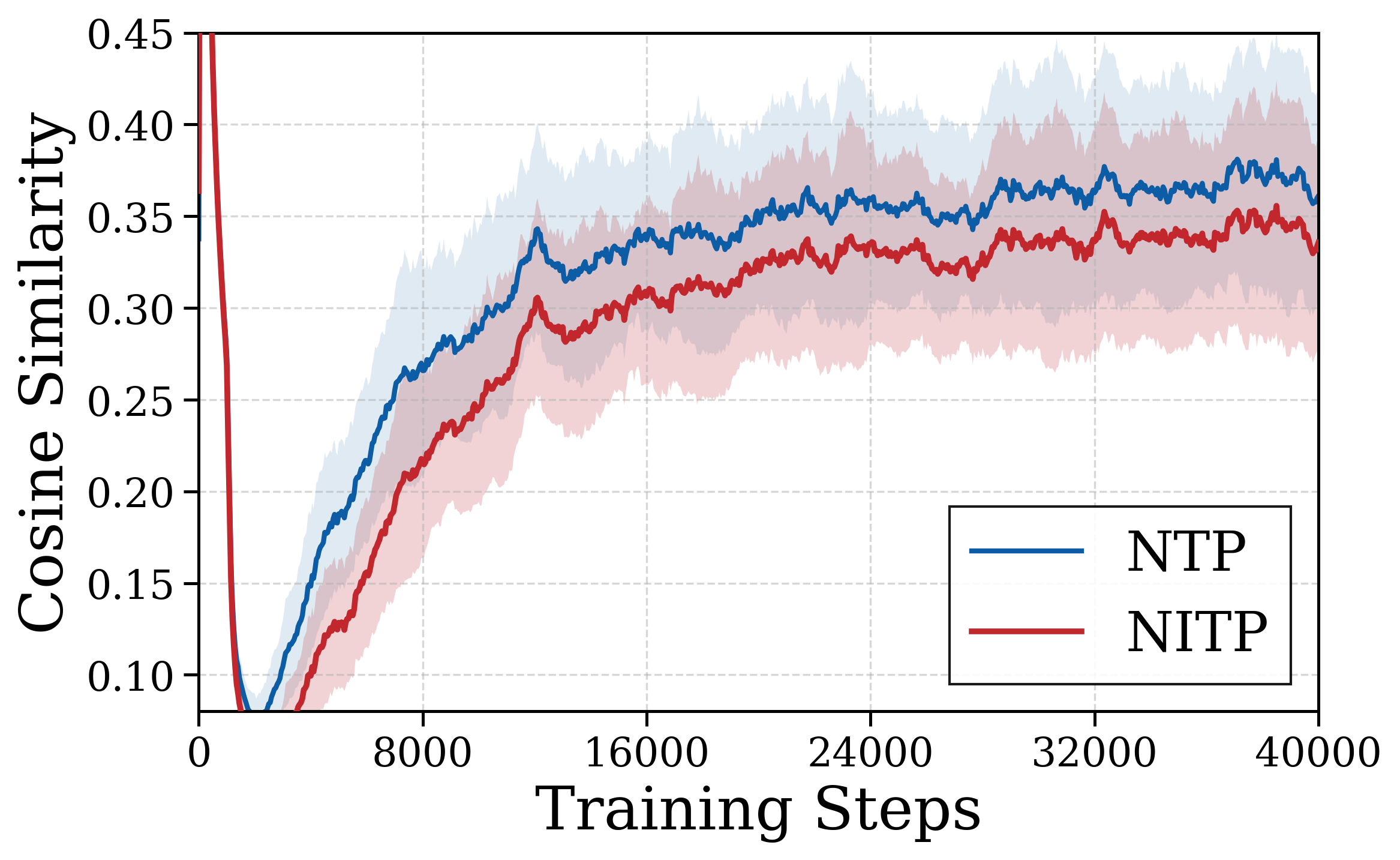}
        \caption{Cosine similarity}
    \end{subfigure}

    \vspace{0.6em}

    \begin{subfigure}[t]{0.49\linewidth}
        \centering
        \includegraphics[width=\linewidth]{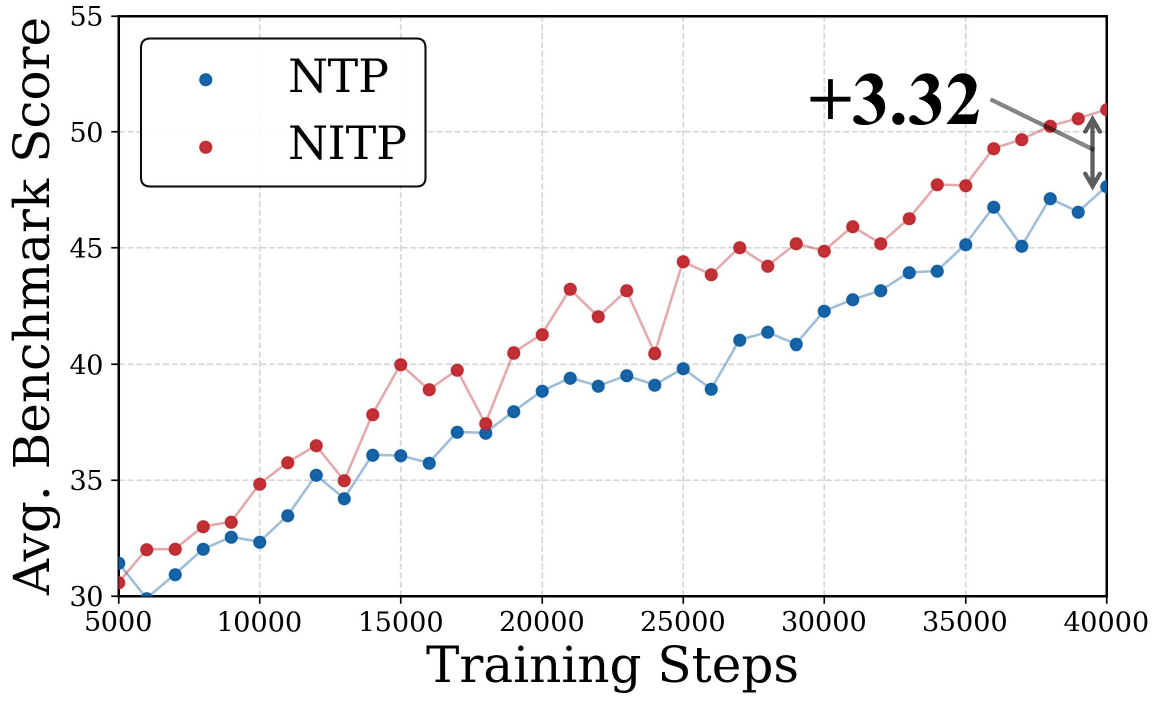}
        \caption{Avg performance (MoE)}
    \end{subfigure}
    \hfill
    \begin{subfigure}[t]{0.49\linewidth}
        \centering
        \includegraphics[width=\linewidth]{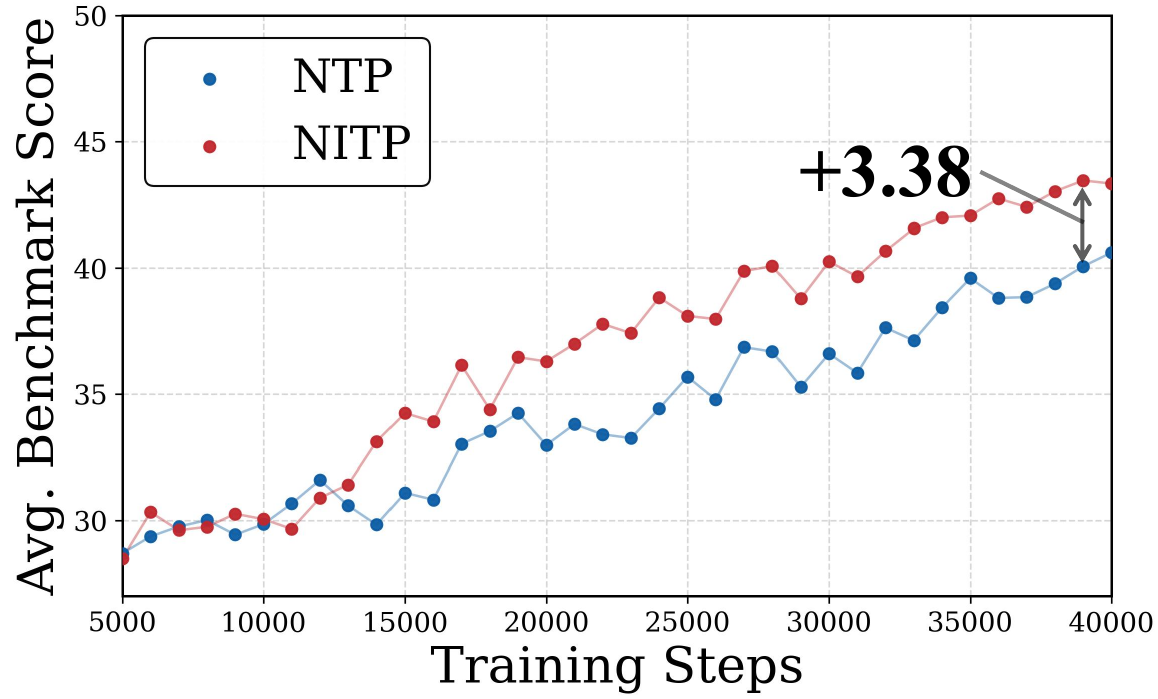}
        \caption{Avg performance (Dense)}
    \end{subfigure}

    \caption{
    \textbf{Top:} Representation geometry of the last hidden states under NTP and NITP.
    \textbf{Bottom:} Average downstream performance of 9B MoE and 2B dense models (details in \Cref{sec:appendix_exp_details}).
    }
    \vspace{-10pt}
    \label{fig:representation-dynamics}
\end{figure}

Prior work has identified a phenomenon known as \emph{representation degeneration}, where likelihood-based training drives learned embeddings to collapse into a narrow, anisotropic cone~\cite{ethayarajh2019ContextualAnisotropy,wang2020improvingDegeneration,barbero2024transformersglasses,gao2019representationdegeneration}. Such geometric collapse limits the expressive capacity of representations and has been linked to degraded generalization on downstream tasks~\cite{ethayarajh2019ContextualAnisotropy,zhao2024representationNLU}. To examine this issue, we track the geometric evolution of last hidden states throughout training. We utilize two established metrics: (1) {Effective Rank}~\cite{roy2007effectiverank}, which quantifies the effective dimensionality of the utilized subspace, and (2) Average {Cosine Similarity} between paired tokens within training batches, serving as a proxy for the global anisotropy of the representation space. As illustrated in \Cref{fig:representation-dynamics}, for NTP, the effective rank diminishes rapidly while cosine similarity increases, indicating that hidden representations drift toward degenerate, anisotropic configurations (detailed analysis in \Cref{app:ana_effective_rank_cosine}). This confirms that standard NTP is geometrically under-constrained: it permits representations to sacrifice expressiveness and distinctiveness without penalty, as long as the next token is correctly predicted.

We attribute this behavior to the under-constrained nature of hidden representations under NTP, where the lack of geometric supervision allows representations to drift into low-expressivity configurations that may harm performance~\cite{ethayarajh2019ContextualAnisotropy}. To bridge this gap, we propose \textbf{Next Implicit Token Prediction} (NITP), an auxiliary pre-training objective that augments discrete next-token prediction with continuous supervision in the representation space. Instead of predicting only the token identity, NITP tasks the model with predicting an {implicit token}, defined as a latent semantic representation of the next token. We construct this supervision signal from the model’s own shallow layers, which empirically preserve richer lexical and local semantic content than deeper, task-specialized layers~\cite{liu2024fantasticsemantics,skean2025layerbylayer}. This design enables scalable, self-supervised targets without introducing external encoders or additional annotations. As evidenced by \Cref{fig:representation-dynamics}, NITP effectively mitigates representation degeneration, maintaining higher effective rank and preventing excessive anisotropy compared to standard NTP. \textbf{Our main contributions are summarized as follows:}

\textbf{1)} We analyze the training objective of NTP and show it provides \textbf{weak supervision for hidden representations}, allowing them to drift toward degenerate space.

\textbf{2)} We propose \textbf{{Next Implicit Token Prediction}} (NITP), a novel objective that augments next-token prediction with dense, continuous supervision in the latent space.

\textbf{3)} We introduce a \textbf{self-supervised design} for NITP by using shallow-layer representations as implicit tokens to serve as prediction targets, enabling semantically rich supervision.

\textbf{4)} We provide \textbf{theoretical analysis and empirical evidence} showing that NITP improves representation geometry and consistently enhances downstream performance.

\vspace{-6pt}
\section{Related Work}
\vspace{-3pt}

\textbf{Pre-training objectives for large language models.} The next-token prediction (NTP) objective is the dominant pre-training paradigm for modern large language models~\cite{liu2024deepseek,yang2025qwen3}, and has been shown to scale effectively with data and model size~\cite{mann2020language, hoffmann2022training,kaplan2020scaling}.
By optimizing the likelihood of the next discrete token given its context, NTP enables models to learn general-purpose representations that support a wide range of downstream tasks. To extend the predictive horizon beyond a single token, several works have explored multi-token prediction (MTP)~\cite{gloeckle2024better,samragh2025yourLLMknows,liu2024deepseek} and related objectives~\cite{mahajan2025Future,tack2025concepts,zuhri2025TOP,liu2025lmtp}. 
These methods introduce auxiliary output heads to predict multiple future tokens or compressed summaries of future context, with the goal of improving planning or long-horizon modeling~\cite{mahajan2025Future}.
Despite their differences, these approaches operate primarily in the discrete token space and rely on token-level supervision derived from one-hot targets.
In contrast, our proposed Next Implicit Token Prediction departs from prior works by augmenting standard NTP with dense, continuous supervision at the representation level.

\textbf{Representation-level supervision.}
Beyond discrete token-level objectives, prior work has explored supervising models directly in continuous representation spaces~\cite{bengio2013representation,oord2018representation,jiao2020tinybert}.
A prominent direction is {layer-wise distillation}, which aligns intermediate representations between models to transfer structural or task-relevant knowledge~\cite{romero2014fitnets,sun2019layerwisebert,jiao2020tinybert,liang2023layerwise}.
This paradigm has recently been extended to generative LLMs~\cite{sun2020contrastive,gu2024minillm,xia2023sheared}. Notably, {Sheared LLaMA}~\cite{xia2023sheared} aligns hidden states to accelerate knowledge recovery after structured pruning.
Another line of research focuses on self-supervised consistency, encouraging agreement between latent representations across time or views without discrete labels, \emph{e.g.}, Contrastive Predictive Coding and BYOL~\cite{oord2018representation,grill2020bootstrap}, as well as self-distillation methods~\cite{zhang2019your,lee2022selfdistill}.
NITP differs from these approaches in both motivation and formulation: motivated by the under-constrained hidden states during LLM pre-training, NITP adopts an autoregressive objective in the latent space that requires the model to predict the semantic representation of the next token.

\section{Methodology}
\label{sec:method}

\begin{figure}[tb!]
    \centering
    \vspace{4pt}    
    \includegraphics[width=0.96\linewidth]{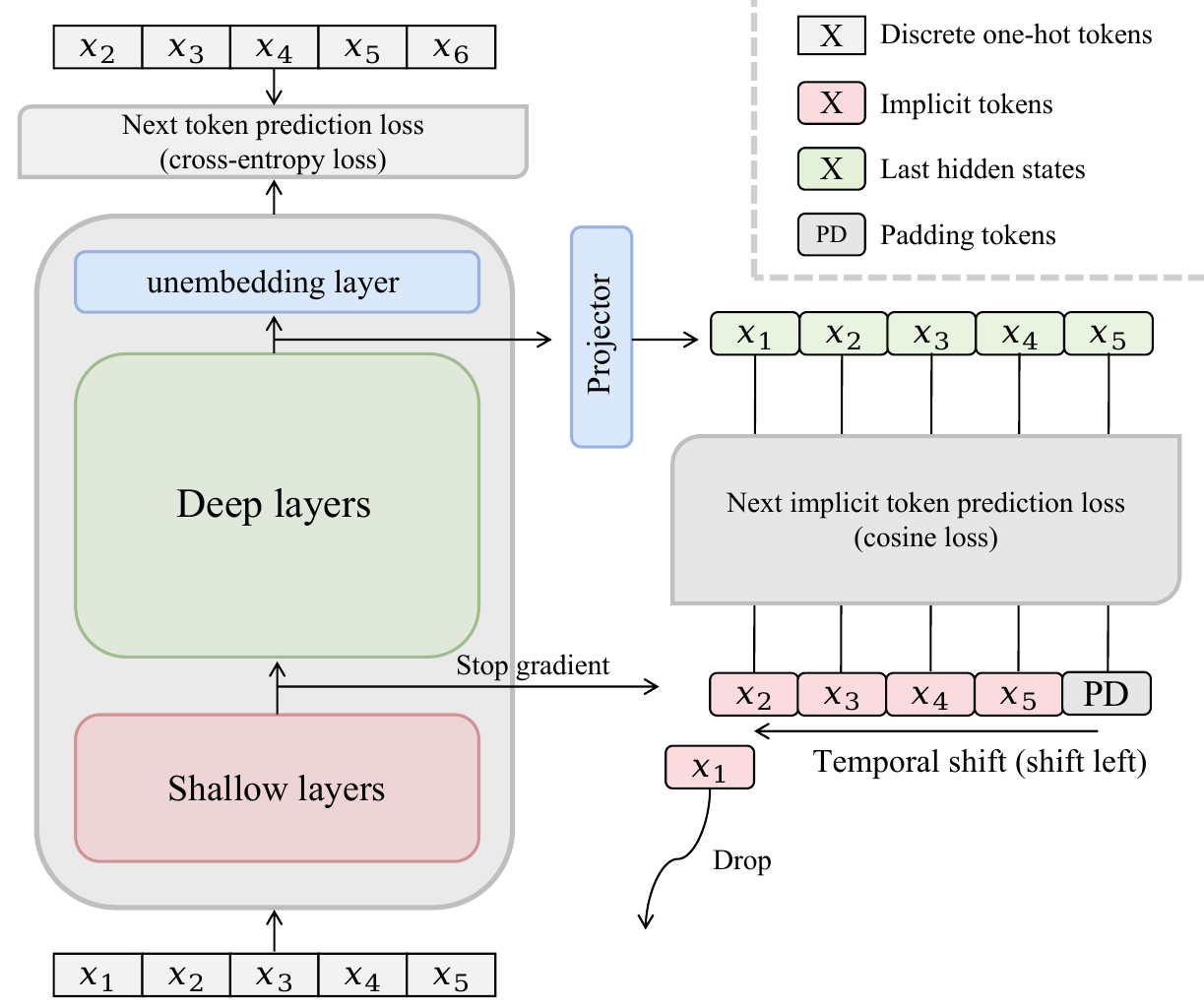}
    \caption{Overview of NITP.
    Next Implicit Token Prediction supervises hidden states by predicting temporally shifted implicit tokens, \emph{i.e.} shallow-layer representations, and is jointly optimized with the standard next-token prediction objective.
    These shallow representations act as semantics anchors (see \Cref{sec:shallow_anchors}).
    The last-layer hidden states are projected to match the implicit targets using a cosine similarity loss.}
    \label{fig:overview}
\end{figure}

In this section, we present \textbf{Next Implicit Token Prediction (NITP)}, a representation-level objective that explicitly supervises hidden states to complement standard Next-Token Prediction (NTP), as illustrated in \Cref{fig:overview}.
By encouraging the model to anticipate the semantic content of the next token in latent space, NITP regularizes representation geometry beyond discrete token-level supervision.

We first revisit the standard NTP paradigm and its limitations in supervising latent representations. We then introduce NITP, which trains the last hidden state $h_t$ to predict an {implicit token}—the shallow-layer representation of the $(t\!+\!1)$-th token with stop-gradient—using a cosine similarity loss jointly optimized with the NTP objective. As the implicit targets are self-generated during the forward pass, NITP incurs negligible computational overhead and scales naturally with model size (see \Cref{sec:experiments} for results).

\subsection{Preliminary: Next-Token Prediction}
Given a token sequence $x = \{x_1, \dots, x_T\}$, let $h_t \in \mathbb{R}^d$ denote the final hidden state at step $t$, serving as the contextual representation. Given an unembedding matrix $W \in \mathbb{R}^{|\mathcal{V}| \times d}$, the standard next-token prediction (NTP) objective minimizes the negative log-likelihood of the target token $x_{t+1}$:
\begin{equation}
\mathcal{L}_{\mathrm{NTP}} = - \mathbb{E}_{x} \left[ \log \frac{\exp(h_t^\top w_{x_{t+1}})}{\sum_{j \in \mathcal{V}} \exp(h_t^\top w_j)} \right],
\end{equation}
where $w_j$ denotes the embedding vector for token $j$. We omit layer normalization terms for brevity. While the NTP loss theoretically involves all vocabulary items via the normalization term, the gradient signal is empirically dominated by the target token $x_{t+1}$ and a few high-probability contenders. Thus, the objective effectively constrains $h_t$ primarily along the direction of the target embedding $w_{x_{t+1}}$. 

\subsection{The Geometric Blind Spot of Next-Token Prediction}
\label{sec:drawback_of_ntp}
The gradient dominance described above creates a structural ``blind spot'' in the optimization landscape: the loss becomes {near-invariant} to perturbations within the vast subspace orthogonal to the target direction. Theoretically, this implies that a wide range of geometrically distinct latent states can yield identical token-level likelihoods.

In practice, this structural freedom does not yield diverse features but instead leads to \emph{representation degeneration}~\cite{ethayarajh2019ContextualAnisotropy,wang2020improvingDegeneration,barbero2024transformersglasses}, where hidden states drift into a low-dimensional, anisotropic cone without penalty. We empirically confirm this collapse by tracking the geometric evolution of hidden states in \Cref{fig:representation-dynamics}. Under standard NTP, we observe a rapid deterioration in {Effective Rank}~\cite{roy2007effectiverank} coinciding with a sharp rise in global {Cosine Similarity}. These trends indicate that the model sacrifices semantic richness for discriminative efficiency, compressing representations even as token-level accuracy improves.

From the perspective of the NTP objective, these degenerate configurations are valid as long as they preserve the correct logit ranking, yet they severely limit the expressive capacity of the latent space. Crucially, such a drop in representation expressiveness has been linked to degraded generalization on downstream tasks~\cite{ethayarajh2019ContextualAnisotropy,zhao2024representationNLU}. This observation underscores a critical limitation: \textbf{NTP defines \textit{what} to predict, but fails to supervise \textit{how} predictions are represented}, necessitating explicit guidance to enforce a semantically structured geometry. To this end, we propose {Next Implicit Token Prediction} (NITP), complementing NTP by explicitly supervising the latent representation space.
Instead of constraining only discrete token identities, NITP requires the model to predict the implicit semantic representation of the next token, thereby directly supervising the geometry and evolution of hidden states.

\subsection{Shallow Layers as Intrinsic Semantic Anchors}
\label{sec:shallow_anchors}


To construct an effective supervision target for NITP, we first clarify the semantic role of the last hidden state $h_t$.
Prior works~\cite{pal2023futurelens,liu2024fantasticsemantics} suggest that $h_t$ contains recoverable semantic information about the next token.
Building on this view, we consider an idealized regime that the model predicts the correct next token $x_{t+1}$ with high confidence.
In this setting, $h_t$ serves as the intermediate representation through which the prediction is realized, and thus is expected to encode semantic information relevant to $x_{t+1}$ in a form that supports the prediction task.

This observation naturally raises the question: what constitutes an appropriate supervision target for $h_t$?
A naive choice would be the static embedding of $x_{t+1}$.
However, static embeddings suffer from \emph{polysemy}---the same token (e.g., ``bank'') can correspond to vastly different meanings depending on the context.
To account for this ambiguity, a suitable supervision target should be a \emph{contextualized} representation that captures the precise semantics of the token conditioned on the context $x_{< t+1}$.

While external encoders~\cite{raffel2020T5,zhang2025qwen3embedding} could provide such contextualized targets, they introduce prohibitive computational overhead and domain shifts. Instead, we propose an efficient, self-supervised solution: utilizing the model's own {shallow layers} as intrinsic semantic anchors.
Our rationale is threefold:
\begin{itemize}[leftmargin=*]
    \item \textbf{Semantic richness:} Recent analysis~\cite{skean2025layerbylayer} reveals that shallow layers act as powerful contextual encoders, resolving lexical ambiguity while preserving fine-grained semantic details. Studies~\cite{liu2024fantasticsemantics} explicitly demonstrate that semantic richness peaks in these early layers and degrades in deeper layers, as the model progressively discards details to focus solely on the sparse discriminative features required for NTP. 
    \item \textbf{Stability:} Transformer training typically exhibits a {bottom-up convergence} pattern~\cite{skean2025layerbylayer}, where layers closer to the input stabilize much faster than deeper layers. This ensures that the generated implicit tokens are stable relative to the deep predictive state $h_t$.
    \item \textbf{Efficiency:} Utilizing shallow layers incurs negligible computational overhead. Unlike external models, the targets are extracted directly from intermediate activations computed during the standard forward pass.
\end{itemize}

By treating the semantic-rich, contextualized shallow representation of $x_{t+1}$ as the {{next implicit token}}, we impose a fidelity constraint that forces the deep state $h_t$ to maintain a structured geometry capable of containing full semantic details, thereby preventing representation degeneration.

\subsection{Next Implicit Token Prediction}
We introduce \textbf{Next Implicit Token Prediction (NITP)}, a continuous representation-level supervision objective designed to complement the discrete next-token prediction. While NTP predicts the discrete identity of the next token, NITP predicts its semantic essence in the latent space.

We formally distinguish between the {predictive state} $h_t$ and the {prediction target} $z_{t+1}$. We define the \textbf{implicit token} $z_{t+1}$ as the dense, contextualized semantic representation of the next token, derived from the model's own shallow layers. The NITP objective forces the final hidden state to align with this implicit token, thereby enforcing a structured and semantic representation geometry.

\paragraph{Implicit token construction.}
For a given input sequence $x$, let $E_{\mathrm{shallow}}(\cdot)$ denote the forward pass through the first $k$ layers of the model (see \Cref{sec:ablation} for details on the choice of $k$).
We construct the {implicit token} $z_{t+1}$, which serves as the optimization target, by extracting the hidden representation at the future time step $t+1$:
\begin{equation}
    \label{eq:stop_gradient}
    z_{t+1} = \mathbf{sg}\left[ E_{\mathrm{shallow}}(x_{\leq t+1})^{(t+1)} \right] \in \mathbb{R}^d,
\end{equation}
where $(\cdot)^{(t+1)}$ denotes selecting the vector at position $t+1$, and $\mathbf{sg}[\cdot]$ indicates the stop-gradient operator. Crucially, by sourcing $z_{t+1}$ from shallow layers, we capture the contextualized semantics of the next token while avoiding the representation collapse often observed in deeper layers, making it a stable and semantically rich anchor.

\paragraph{The prediction objective.}
NITP tasks the last hidden state $h_t$ with predicting the implicit token $z_{t+1}$. To bridge the potential distributional gap between the deep predictive state and the shallow target, we employ a projection head $\mathcal{P}(\cdot)$ (\emph{e.g.}, an MLP). The objective minimizes the geometric misalignment using cosine similarity:
\begin{equation}
    \mathcal{L}_{\mathrm{NITP}}(\theta) 
    = 1 - \frac{\mathcal{P}(h_t)^\top z_{t+1}}{\|\mathcal{P}(h_t)\|_2 \cdot \|z_{t+1}\|_2}.
\end{equation}
We favor cosine similarity over other loss functions to enforce scale invariance as demonstrated in \Cref{sec:ablation}.

\paragraph{Combined training.}
The overall pre-training framework acts as a dual-supervision mechanism, combining the discrete NTP loss with the continuous NITP supervision:
\begin{equation}
    \label{eq:loss_all}
    \mathcal{L}_{\mathrm{total}}(\theta) = \mathcal{L}_{\mathrm{NTP}}(\theta) + \lambda\,\mathcal{L}_{\mathrm{NITP}}(\theta),
\end{equation}
where $\lambda > 0$ is a hyperparameter controlling the weight of the implicit prediction. By requiring the model to predict both the discrete identity and the continuous {implicit token}, NITP explicitly regularizes the hidden representations toward a more semantically structured feature space.

\subsection{Theoretical Analysis: Regularizing the Semantic Manifold}
\label{sec:theory}

Standard Next-Token Prediction (NTP) supervises the hidden state $h_t$ primarily by maximizing its dot product with the target token embedding $w_{x_{t+1}}$. 
However, this objective is invariant to perturbations in the subspace orthogonal to the informative directions of the unembedding matrix.
Mathematically, this manifests as a rank-deficient Hessian matrix with respect to the semantic geometry, allowing hidden states to drift into degenerate configurations without penalty—a phenomenon closely linked to representation anisotropy~\cite{ethayarajh2019ContextualAnisotropy}.

We formally analyze how Next Implicit Token Prediction (NITP) mitigates this by regularizing the {angular geometry} of the latent space. Unlike NTP, which allows unconstrained drift in null directions, NITP acts as a geometric anchor, introducing positive curvature to the entire semantic subspace.

\paragraph{Setup.}
Let $h \in \mathbb{R}^d$ be the last hidden state used for implicit prediction. 
In practice, the NITP objective utilizes a non-linear projection $\mathcal{P}(\cdot)$. 
To characterize the regularization effect, we adopt the {Generalized Gauss-Newton (GGN) approximation}~\cite{martens2020GGN}, which models the Hessian with respect to $h$ as $J_P^\top H_{\text{NITP}} J_P$, where $H_{\text{NITP}}$ denotes the Hessian of the NITP loss.
Crucially, this quadratic form preserves the spectral properties (specifically positive definiteness) of $H_{\text{NITP}}$, provided that the projector is locally well-conditioned (i.e., $J_P$ is full-rank).
This implies that the regularization is fundamentally governed by the structure of the loss function.
Therefore, we simplify the analysis by examining the NITP objective directly on $h$ to isolate its intrinsic geometric properties from the architectural details.

We decompose variations in $h$ into a \textit{radial component} (change in norm) and an \textit{angular component} (change in direction). Let $r = \|h\|_2$ and $u = h/r$.
Let $z \in \mathbb{R}^d$ be the fixed {implicit target} derived from shallow layers, with normalized direction $v = z/\|z\|_2$.
The NITP objective is defined as $\mathcal{L}_{\mathrm{NITP}}(h) = 1 - \cos(h, z) = 1 - u^\top v$.

\begin{table*}[t!]
    \centering
    \caption{\textbf{Main results.} Performance comparison between NTP and NITP.
    We categorize benchmarks into \textbf{Knowledge \& Language} (5 tasks) and \textbf{Reasoning \& Problem Solving} (8 tasks).
    Best results are \textbf{bolded}.
    `A' denotes the number of activated parameters per token, and the number before `A' represents the total parameters in MoE.
    \textbf{Abbreviations:} M-Pro denotes MMLU-Pro, LAMB. denotes LAMBADA, CSQA denotes CommonsenseQA, COPA denotes Balanced COPA.
    }
    \label{tab:main_results}
    \vspace{10pt}
    \resizebox{\textwidth}{!}{
    \begin{tabular}{ll cccccc ccccccc c}
        \toprule
        \multirow{2}{*}{\textbf{Model}} & \multirow{2}{*}{\textbf{Method}} & 
        \multicolumn{5}{c}{\textbf{Knowledge \& Language Understanding}} & 
        \multicolumn{8}{c}{\textbf{Reasoning \& Problem Solving}} & 
        \multirow{2}{*}{\textbf{Avg.}} \\
        
        \cmidrule(lr){3-7} \cmidrule(lr){8-15}
        
        & & MMLU & M-Pro & C-Eval & Xiezhi & LAMB. & BBH &  ARC-C & CSQA & COPA & C3 & AGIEval & GSM8k & LCBench & \\
        \midrule
        
        \multirow{2}{*}{1.9bA0.3b} 
        & NTP           & 31.05 & 7.14 & 32.31 & \textbf{32.13} & 50.36 & 17.70 & 24.74 & 25.38 & 61.20 & \textbf{32.21} & 24.57 & 6.06 & 1.56 & 26.65 \\ 
        & \textbf{NITP} & \textbf{31.68} & \textbf{7.47} & \textbf{33.16} & 31.22 & \textbf{50.52} & \textbf{19.11} & \textbf{29.20} & \textbf{26.61} & \textbf{62.90} & 29.69 & \textbf{26.69} & \textbf{6.44} & \textbf{2.26} & \textbf{27.46} \\ 
        \midrule
        
        \multirow{2}{*}{3bA0.5b} 
        & NTP           & 34.60 & 11.00 & \textbf{33.53} & 36.53 & \textbf{55.45} & 21.92 & 32.65 & 34.15 & \textbf{65.60} & 39.06 & 26.67 & 12.81 & 3.82 & 31.37 \\ 
        & \textbf{NITP} & \textbf{37.37} & \textbf{12.29} & 33.12 & \textbf{39.61} & \textbf{55.45} & \textbf{26.14} & \textbf{37.11} & \textbf{37.92} & \textbf{65.60} & \textbf{44.38} & \textbf{27.76} & \textbf{14.40} & \textbf{4.17} & \textbf{33.49} \\ 
        \midrule
        
        \multirow{2}{*}{9bA1b} 
        & NTP           & 43.71 & 15.29 & 38.97 & 45.59 & 62.32 & 28.07 & 51.20 & 45.70 & 68.10 & 56.65 & \textbf{30.91} & 30.09 & 6.95 & 40.27 \\ 
        & \textbf{NITP} & \textbf{46.14} & \textbf{21.00} & \textbf{40.72} & \textbf{49.10} & \textbf{64.49} & \textbf{28.67} & \textbf{53.95} & \textbf{49.96} & \textbf{70.70} & \textbf{63.01} & 29.97 & \textbf{32.52} & \textbf{8.00} & \textbf{42.94} \\ 
        \bottomrule
    \end{tabular}
    }
\end{table*}

\begin{lemma}[\textbf{Hessian of the NITP Objective}]
\label{lemma:hessian}
Let $s := u^\top v$ be the cosine alignment. The Hessian of $\mathcal{L}_{\mathrm{NITP}}$ with respect to $h$ is given by:
\begin{equation}
    H_{\mathrm{NITP}}(h) = \frac{1}{r^2} \left[ s(I - uu^\top) + (u A^\top + A u^\top) \right],
\end{equation}
where $A = v - su$ is the tangential difference vector.
Near convergence, where the representation aligns with the target ($s \to 1$ and implies $A \to 0$), the Hessian simplifies to:
\begin{equation}
    H_{\mathrm{NITP}}(h) \approx \frac{1}{r^2} \underbrace{(I - uu^\top)}_{P_{\perp u}}.
\end{equation}
Here, $P_{\perp u}$ denotes the projection matrix onto the subspace orthogonal to $h$ (the tangent space of the hypersphere).
\end{lemma}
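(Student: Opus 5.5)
The plan is to compute the Hessian directly by differentiating the gradient of $\mathcal{L}_{\mathrm{NITP}}(h) = 1 - h^\top v/\|h\|_2$. Since $v$ is fixed (the target $z$ is under stop-gradient), the only dependence on $h$ is through $r=\|h\|_2$ and $u=h/r$. I will use three elementary differentials:
\begin{equation}
dr = u^\top dh, \qquad du = \tfrac{1}{r}(I-uu^\top)\,dh = \tfrac{1}{r}P_{\perp u}\,dh, \qquad ds = v^\top du = \tfrac{1}{r}A^\top dh .
\end{equation}
The last identity uses $P_{\perp u}v = v - su = A$.

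The first step is the gradient. Writing $\mathcal{L}_{\mathrm{NITP}} = 1 - s$ gives
\begin{equation}
\nabla_h \mathcal{L}_{\mathrm{NITP}} = -\tfrac{1}{r}A = -\tfrac{1}{r}(v - su).
\end{equation}
This already shows that the gradient is purely tangential, since $u^\top A = 0$.

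The second step differentiates $g(h) = -r^{-1}(v - su)$ by the product rule. Two contributions arise. The factor $r^{-1}$ contributes $d(r^{-1}) = -r^{-2}u^\top dh$, which yields the term $r^{-2}A u^\top\,dh$. The factor $su$ contributes $d(su) = u\,ds + s\,du = r^{-1}\bigl(uA^\top + sP_{\perp u}\bigr)dh$, which yields $r^{-2}\bigl(uA^\top + sP_{\perp u}\bigr)dh$. Collecting the two gives
\begin{equation}
H_{\mathrm{NITP}}(h) = \frac{1}{r^2}\bigl[s(I-uu^\top) + uA^\top + Au^\top\bigr].
\end{equation}
This is the claimed formula, and it is visibly symmetric, which serves as a consistency check.

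The third step is the near-convergence limit. As $s \to 1$, we have $\|A\|_2^2 = 1 - s^2 \to 0$, because $u$ and $v$ are unit vectors. Hence the rank-two cross term $uA^\top + Au^\top$ has operator norm at most $2\|A\|_2/r^2 \to 0$, and $sP_{\perp u} \to P_{\perp u}$. Together these give $H_{\mathrm{NITP}} \approx r^{-2}P_{\perp u}$. I expect no real obstacle here. The only delicate point is bookkeeping the sign and the transpose order in the cross terms, which I will check via the symmetry of the result and the limit.
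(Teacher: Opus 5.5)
Your proposal is correct and matches the paper's proof step for step: you compute the gradient $-A/r$ and then apply the product rule to $-r^{-1}(v-su)$, using $\nabla_h r^{-1} = -u/r^2$ and $d(su) = r^{-1}(uA^\top + sP_{\perp u})\,dh$, which are exactly the paper's two terms. Your near-convergence step is slightly sharper than the paper's, since you make $A\to 0$ explicit through $\|A\|_2^2 = 1-s^2$ and an operator-norm bound on the cross term, whereas the paper simply asserts that the term vanishes.
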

\textit{Proof.} See \Cref{app:proofs}.

\paragraph{Remark: necessity of semantic targets.}
The curvature guarantee relies on high alignment ($s \to 1$). This justifies the choice of \textit{predictable} semantic targets derived from shallow layers over arbitrary vectors, which would yield weak alignment ($s \approx 0$). Empirically, we observe that the NITP loss, \emph{i.e.} $1 - s$ consistently converges below $0.1$ (\emph{e.g.}, $0.04$ for the 9B MoE), confirming that the optimization stably operates in the region where the Hessian is positive definite and geometric regularization is effective.


\paragraph{Analysis of spectral properties.}
Lemma~\ref{lemma:hessian} reveals two critical properties of NITP:
\begin{itemize}
    \item \textbf{Radial null space:} $u^\top H_{\mathrm{NITP}} u = 0$. NITP imposes no curvature on the vector norm, allowing the NTP objective to freely adjust the scale ($r$)—for instance, to optimize softmax temperature—without interference.
    \item \textbf{Angular spectral lifting:} For any semantic deviation vector $w$ orthogonal to the current state ($w \perp u$), we have strictly positive curvature:
    \begin{equation}
        w^\top H_{\mathrm{NITP}} w \approx \frac{1}{r^2} \|w\|_2^2 > 0.
    \end{equation}
\end{itemize}

\begin{theorem}[\textbf{Null Space Mitigation in Semantic Subspace}]
\label{thm:null_space}
Consider the total loss Hessian $H_{\mathrm{total}} = H_{\mathrm{NTP}} + \lambda H_{\mathrm{NITP}}$.
Let $\mathcal{N}_{\mathrm{sem}}$ be the {semantic null space} of NTP, defined as the set of directions $w \perp u$ such that $w^\top H_{\mathrm{NTP}} w \approx 0$.
With the addition of NITP, for any $w \in \mathcal{N}_{\mathrm{sem}}$, the curvature becomes strict:
\begin{equation}
    w^\top H_{\mathrm{total}} w \;\approx\; 0 + \frac{\lambda}{r^2}\|w\|_2^2 > 0.
\end{equation}
\end{theorem}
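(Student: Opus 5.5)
The claim follows from linearity of the quadratic form in the Hessian, together with Lemma~\ref{lemma:hessian}. For any direction $w$,
\[
w^\top H_{\mathrm{total}} w = w^\top H_{\mathrm{NTP}} w + \lambda\, w^\top H_{\mathrm{NITP}} w.
\]
So the plan is to treat the two terms separately. The first is controlled by the definition of $\mathcal{N}_{\mathrm{sem}}$; the second by the lemma, using that $w \perp u$.

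\textbf{Step 1 (the NTP term).} For $w \in \mathcal{N}_{\mathrm{sem}}$ we have, by definition, $w^\top H_{\mathrm{NTP}} w \approx 0$. This supplies the ``$0$'' in the displayed statement and needs no further argument.

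\textbf{Step 2 (the exact NITP term).} Use the exact form of $H_{\mathrm{NITP}}$ from Lemma~\ref{lemma:hessian} and the condition $u^\top w = 0$:
\begin{itemize}
\item The term $s(I-uu^\top)$ contributes $s\|w\|_2^2$, since $P_{\perp u} w = w$.
\item The cross term contributes $w^\top(uA^\top + Au^\top)w = 2(u^\top w)(A^\top w) = 0$.
\end{itemize}
Hence, exactly,
\[
w^\top H_{\mathrm{NITP}} w = \frac{s}{r^2}\|w\|_2^2 .
\]
This is stronger than the near-convergence approximation: on the tangent space the cross term vanishes identically, not merely because $A \to 0$.

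\textbf{Step 3 (the regime $s \to 1$).} Invoke the near-convergence regime from Lemma~\ref{lemma:hessian} and the subsequent remark, namely $s \to 1$ (empirically $1 - s < 0.1$). This gives $w^\top H_{\mathrm{NITP}} w \approx \|w\|_2^2 / r^2$. Multiplying by $\lambda$ and adding the NTP term gives
\[
w^\top H_{\mathrm{total}} w \approx 0 + \frac{\lambda}{r^2}\|w\|_2^2 .
\]

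\textbf{Step 4 (strict positivity).} Here I would be careful about what is actually asserted. The NITP contribution is exactly $\lambda s \|w\|_2^2 / r^2$, which is strictly positive whenever $s > 0$, $\lambda > 0$ and $w \neq 0$. Strictness of the total, however, depends on the residual NTP curvature $\epsilon := w^\top H_{\mathrm{NTP}} w$. Under the standard convexity of the softmax cross-entropy in the logits, the GGN term is positive semidefinite, so $\epsilon \ge 0$ up to the non-GGN part. I would therefore state the result as
\[
w^\top H_{\mathrm{total}} w \ge \frac{\lambda s}{r^2}\|w\|_2^2 - |\epsilon|,
\]
which is positive as soon as $|\epsilon| < \lambda s \|w\|_2^2 / r^2$. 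This is precisely the meaning of ``$\approx 0$'' in the definition of $\mathcal{N}_{\mathrm{sem}}$.

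The main obstacle is this last step: turning the informal ``$\approx$'' into a quantitative condition, in particular keeping the sign of $\epsilon$ under control. The natural fix is to use the GGN approximation for $H_{\mathrm{NTP}}$ as well, which makes it positive semidefinite and gives $\epsilon \ge 0$ outright. Everything else is direct substitution into Lemma~\ref{lemma:hessian}.
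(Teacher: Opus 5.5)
Your proposal is correct and follows the paper's proof essentially step for step: the same split of the quadratic form, the definition of $\mathcal{N}_{\mathrm{sem}}$ for the NTP term, the exact tangential identity $w^\top H_{\mathrm{NITP}} w = \frac{s}{r^2}\|w\|_2^2$ (the paper's Property~2, where the cross terms also vanish because $w \perp u$), and then $s \to 1$ with $s>0$ and $\lambda>0$. Your Step~4 is a sharpening the paper does not make, and its hedge is unnecessary: in the paper's simplified setting the logits $Wh$ are linear in $h$ (layer norm omitted), so $H_{\mathrm{NTP}} = W^\top(\mathrm{diag}(p) - pp^\top)W \succeq 0$ exactly, with $p$ the softmax output, which gives $\epsilon \ge 0$ and hence $w^\top H_{\mathrm{total}} w \ge \frac{\lambda s}{r^2}\|w\|_2^2 > 0$ for every nonzero $w \perp u$.
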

See detailed analysis at \Cref{app:proofs}.
\paragraph{Interpretation.}
Theorem~\ref{thm:null_space} demonstrates that NITP performs {spectral lifting} on the angular manifold. While standard NTP leaves vast degrees of freedom orthogonal to the target logit unpenalized, NITP effectively regularizes these under-constrained directions by introducing positive curvature. This mitigates the ``flat valleys" in the optimization landscape responsible for semantic drift, forcing the model to maintain robust representations aligned with the semantic anchors derived from shallow layers.

\section{Experiments}
\label{sec:experiments}
In this section, we conduct a comprehensive empirical evaluation to assess the effectiveness of Next Implicit Token Prediction (NITP) in terms of downstream performance. We train both dense and mixture-of-experts models from scratch, covering model scales from hundreds of millions to billions of parameters, and assess performance on a broad
suite of knowledge and reasoning benchmarks.
Additionally, we conduct ablation studies to examine the effect of key design choices in NITP and to verify its effectiveness.

\subsection{Experiment Setups}

\textbf{Training settings.}
We evaluate NITP on both Mixture-of-Experts (MoE)~\cite{shazeer2017moe} and dense language models to assess its generality across model architectures and scales.
For MoE models, we adopt the DeepSeek-V2~\cite{liu2024deepseekv2} architecture employing 144 experts with top-$8$ routing, where each expert is implemented as a SwiGLU-based~\cite{shazeer2020glu} FFN.
We scale this architecture up to a total parameter count of 9B, corresponding to approximately 1B activated parameters per token.
For dense models, we conduct experiments on model sizes ranging from 0.5B to 3B parameters.
All models are trained from scratch with token budgets scaled according to their parameters, and within each comparison NTP and NITP share an identical token budget to ensure fairness. Specifically, the 9B MoE model is trained on a large-scale corpus of high-quality tokens comprising a diverse mix of English, Chinese, code, mathematics, and reasoning data.
The context length is fixed at 8192 tokens unless stated otherwise.
Comprehensive training details are provided in \Cref{sec:appendix_exp_details}.

\textbf{Evaluation.}
We conduct comprehensive few-shot evaluations on a diverse suite of benchmarks to assess the effectiveness of NITP relative to the standard NTP objective. We categorize these benchmarks into two primary domains: \textbf{1) Knowledge \& Language Understanding:}
To evaluate broad world knowledge and multi-domain comprehension, we utilize English benchmarks MMLU~\cite{hendrycks2020MMLU} and MMLU-Pro~\cite{wang2024mmlupro}, alongside Chinese datasets C-Eval~\cite{huang2023ceval} and Xiezhi~\cite{gu2024xiezhi}. We additionally include LAMBADA~\cite{paperno2016lambada} to test long-range dependency modeling. \textbf{2) Reasoning \& Problem Solving:}
To examine logical deduction and complex problem-solving, we employ ARC-Challenge~\cite{clark2018arcc} for scientific reasoning, together with CommonsenseQA~\cite{talmor2019commonsenseqa} and Balanced COPA~\cite{Balanced_copa} for commonsense reasoning. We also include BBH~\cite{suzgun2023BBH} and C3~\cite{sun2020c3} to evaluate challenging multi-step inference and reading comprehension. Furthermore, we assess rigorous application skills using AGIEval~\cite{zhong2024agieval} on standardized exams, GSM8k~\cite{cobbe2021gsm8k} on mathematics, and OpenCompass LCBench~\cite{2023opencompass} (a collection of programming problems from Leetcode weekly competitions) on code generation.

\subsection{Main Results}
\begin{table*}[t!]
    \centering
    \caption{\textbf{Results for dense models.}
    Performance comparison between NTP and NITP on seven representative benchmarks.
    Best results are \textbf{bolded}.
    \textbf{Avg.} denotes the arithmetic mean across all benchmarks.
    }
    \label{tab:dense_results}
    \vspace{5pt}
    \resizebox{0.80\textwidth}{!}{
    \begin{tabular}{ll ccc cccc c}
        \toprule
        {\textbf{Model}} &{\textbf{Method}} & MMLU & C-Eval & BBH & ARC-C & C3 & AGIEval & LCBench & {\textbf{Avg.}} \\
        
        \midrule

        \multirow{2}{*}{0.5B} 
        & NTP  & 30.59 & \textbf{33.72} & 18.96 & 31.95 & 28.21 & \textbf{27.01} & 0.52 & 24.42 \\
        & NITP & \textbf{31.01} & 32.78 & \textbf{19.92} & \textbf{34.71} & \textbf{32.10} & \textbf{26.15} & \textbf{1.39} & \textbf{25.44} \\
        \midrule
        
        \multirow{2}{*}{2B} 
        & NTP  & 37.96 & 35.67 & 26.59 & 39.52 & 49.26 & 30.51 & \textbf{3.83} & 31.91 \\
        & \textbf{NITP} & \textbf{40.14} & \textbf{37.81} & \textbf{27.33} & \textbf{41.92} & \textbf{53.42} & \textbf{31.66} & 3.65 & \textbf{33.70} \\
        \midrule

        \multirow{2}{*}{3B} 
        & NTP           & 43.54 & 39.17 & \textbf{31.25} & 51.20 & 59.01 & 31.77 & 4.35 & 37.18 \\
        & \textbf{NITP} & \textbf{44.95} & \textbf{40.14} & 29.40 & \textbf{51.55} & \textbf{63.67} & \textbf{35.11} & \textbf{4.87} & \textbf{38.53} \\
        
        \bottomrule
    
    \end{tabular}
    
    }
    \vspace{-10pt}
\end{table*}
\textbf{Evaluation on MoE models.} We mainly evaluate the performance of the proposed NITP against the NTP baseline across three Mixture-of-Experts (MoE) scales: 1.9B (0.3B activated), 3B (0.5B activated), and 9B (1B activated). The models were pre-trained from scratch with increasing token budgets proportional to their scale. The quantitative results are reported in \Cref{tab:main_results}. The results demonstrate that NITP consistently outperforms standard NTP across all model sizes, {underscoring the method's scalability}. Specifically, NITP yields average score improvements of 0.8, 2.1, and 2.7 points for the 1.9B, 3B, and 9B models, respectively. For instance, on the 9B model, NITP achieves a 5.71\% absolute improvement on MMLU-Pro (rising from 15.29\% to 21.00\%), a benchmark specifically designed to test robust reasoning beyond simple knowledge retrieval. Similarly, we observe remarkable boosts in reading comprehension and commonsense reasoning, with C3 scores increasing by 6.36\% (56.65\% to 63.01\%) and CommonsenseQA by 4.26\% (45.70\% to 49.96\%). Also, on symbolic and math tasks like ARC-C and GSM8k, NITP maintains a consistent lead. 

\textbf{Evaluation on dense models.}
To assess the generality of NITP beyond MoE architectures, we evaluate dense models ranging from 0.5B to 3B parameters on seven representative benchmarks, including MMLU, C-Eval, BBH, ARC-C, C3, AGIEval, and LCBench.
The quantitative results are reported in \Cref{tab:dense_results}.
The results demonstrate that NITP is architecture-agnostic, consistently outperforming the NTP baseline across all model scales.
On the {0.5B model}, NITP induces notable improvements; specifically, C3 scores surge by 3.89 points, and the average score increases from 24.42\% to 25.44\%. 
For the 2B model, NITP improves the average score by 1.79 points (31.91\% to 33.70\%), with notable gains on C3 (+4.16) and MMLU (+2.18).
Similarly, on the 3B model, NITP achieves an average improvement of 1.35 points, accompanied by substantial gains on AGIEval and C3.
Overall, these results confirm that NITP consistently enhances representation quality across diverse tasks, independent of model architecture such as MoE or dense.


Collectively, these empirical results validate the core design philosophy of NITP. 
By augmenting discrete prediction with semantic-rich continuous supervision, NITP effectively constrains the latent geometry, preventing hidden states from drifting into degenerate configurations. 
This translates into consistent performance improvements across diverse architectures (MoE and Dense). 
Crucially, the substantial gains observed across the evaluated MoE and dense scales further support the {scalability} of NITP.
Beyond downstream benchmark accuracy, we directly evaluate the utility of the learned hidden states as frozen sentence representations in \Cref{app:repr_quality_lm}, where NITP improves 23 out of 25 MTEB tasks while preserving validation cross-entropy loss.

\paragraph{Training efficiency.} We analyze the computational efficiency of NITP by comparing its complexity with the standard NTP baseline. For a MoE model, the training cost is dominated by $L$ layers of backbone blocks and the unembedding projection, scaling as $O(L (d^2 + k d d_e) + Vd)$, where $k$ is the number of activated experts, $d_e$ is the expert dimension, and $V$ is the vocabulary size. In contrast, the NITP objective introduces a single projection head and a cosine loss, with a total complexity of $O(d^2)$. Crucially, since the implicit targets are derived from existing intermediate activations and decoupled via a stop-gradient operator, NITP incurs no additional backbone forward passes or backward propagation through the early layers. Given that $L \gg 1$, the relative complexity ensures that the extra overhead is inherently marginal.
Numerical instantiation on our 9B MoE model ($d=1280, L=24$) confirms this advantage.
\textbf{The additional training FLOPs from both forward and backward passes account for approximately 2\% of the total computation.}
This negligible cost ensures that NITP serves as a highly scalable supervision objective without sacrificing training throughput. See details in Appendix~\ref{append:flops}.

\begin{table*}[t]
\centering
\caption{
\textbf{Ablation studies.}
We evaluate the impact of key design choices in NITP using the 3B MoE model, including the target layer selection, temporal shift strategy, loss function, and regularization.
The {default NITP} configuration (shallow $L_4$, next-token target, cosine loss) achieves the best performance. $L_4$ denotes selecting the 4th layer out of 17 layers as the target layer.
}
\label{tab:comprehensive_ablation}
\vspace{5pt}
\resizebox{0.77\textwidth}{!}{
\begin{tabular}{ll cccccc}
\toprule
\textbf{Ablation Factor} & \textbf{Setting} & \textbf{MMLU} & \textbf{MMLU-Pro} & \textbf{CSQA}  & \textbf{BBH} & \textbf{LCBench} & \textbf{Avg.} \\
\midrule
\multicolumn{2}{l}{{Baseline: NTP}} & 34.60 & 11.00 & 34.15 & 21.92 & 3.82 & 21.10 \\
\midrule

\multirow{3}{*}{Target Layer} 
& Deep ($L_{14}$) & 35.79 & 10.43 & \textbf{38.90} & 23.25 & 2.43 & 22.16 \\
& Middle ($L_{8}$) & 35.33 & 11.57 & 34.72 & 22.07 & 2.43 & 21.22 \\
& \textbf{Shallow ($L_{4}$)} & \textbf{37.37} & \textbf{12.29} & 37.92 & \textbf{26.14} & \textbf{4.17} & \textbf{23.58} \\
\midrule

\multirow{2}{*}{Temporal Shift} 
& Current-step ($t \to t$) & 33.09 & 8.14 & 29.15 & 20.96 & 2.43 & 18.75 \\
& \textbf{Next-token ($t \to t+1$)} & \textbf{37.37} & \textbf{12.29} & \textbf{37.92} & \textbf{26.14} & \textbf{4.17} & \textbf{23.58} \\
\midrule

\multirow{4}{*}{Loss Function} 
& MSE Loss & 32.77 & 10.29 & 30.38 & 21.55 & 1.91 & 19.38 \\
& Smooth L1 & \textbf{39.72} & 11.43 & 37.51 & 24.74 & 2.78 & 23.24 \\
& Distillation (KL) & 34.67 & 11.14 & \textbf{38.08} & \textbf{26.88} & 3.65 & 22.88 \\
& \textbf{Cosine Similarity} & 37.37 & \textbf{12.29} & 37.92 & 26.14 & \textbf{4.17} & \textbf{23.58} \\
\midrule

\multirow{2}{*}{Regularization} 
& Generic Cosine Reg. & 34.45 & {10.14} & 33.25 & 22.29 & {3.82} & 20.79 \\
& \textbf{NITP} & \textbf{37.37} & \textbf{12.29} & \textbf{37.92} & \textbf{26.14} & \textbf{4.17} & \textbf{23.58} \\

\bottomrule
\end{tabular}

}
\vspace{-10pt}
\end{table*}

\subsection{Ablation Study} 
\label{sec:ablation}

We conduct ablation studies to understand how each component contributes to its effectiveness. In particular, we focus on the following aspects: 1) how to construct the implicit prediction target, including the choice of target layer and temporal shift. 2) how to design a stable and effective auxiliary objective, including the loss function and its role beyond generic regularization. Unless otherwise specified, all ablation experiments are conducted on a 3B-parameter MoE model in \Cref{tab:hyperparams_moe}.

\begin{figure}[tb!]
    \centering
    \includegraphics[width=0.65\linewidth]{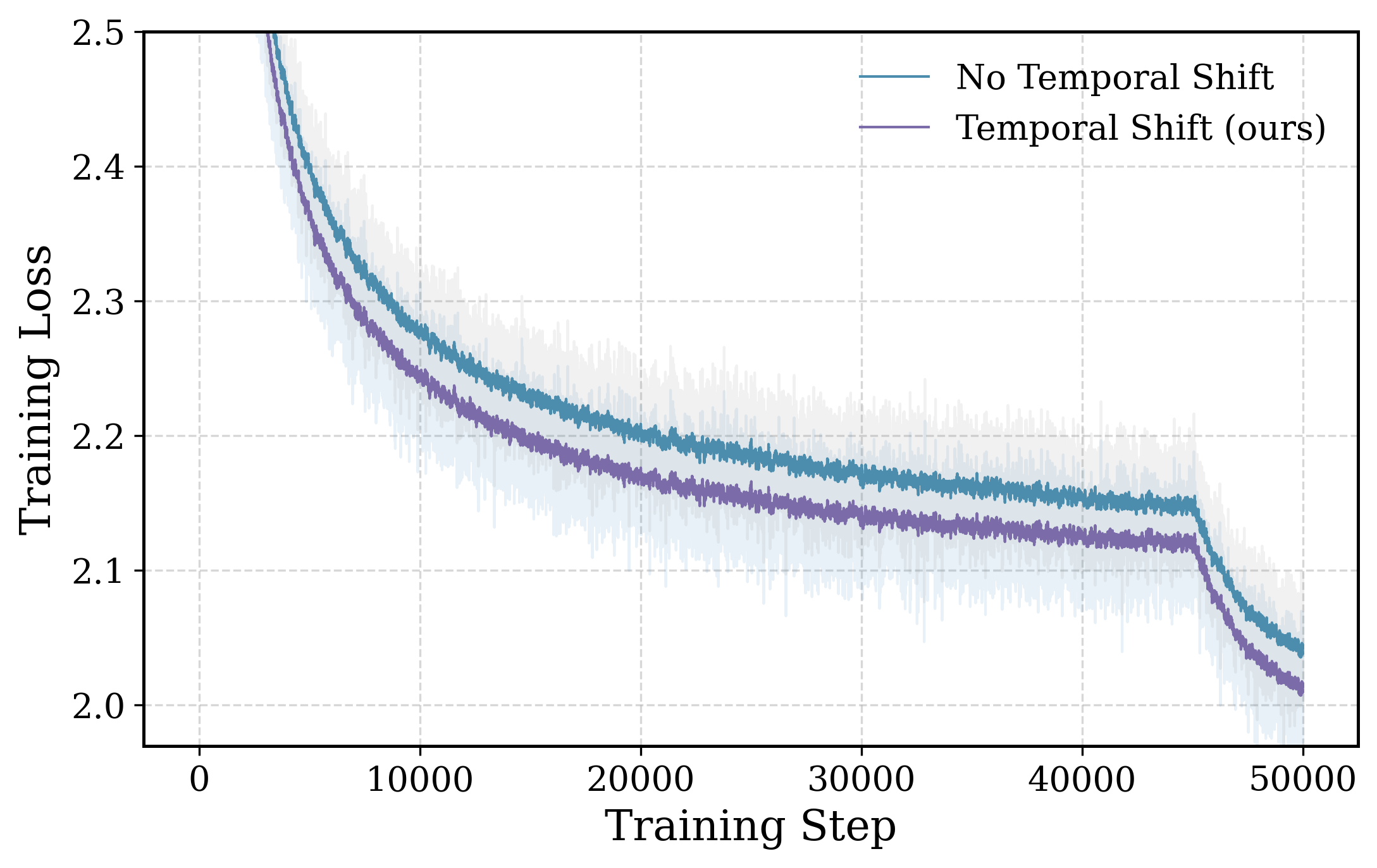}
    \caption{Loss comparison between whether temporal shift or not.}
    \label{fig:loss_comparison}
\end{figure}

\begin{figure}[tb!]
    \centering
    \includegraphics[width=0.65\linewidth]{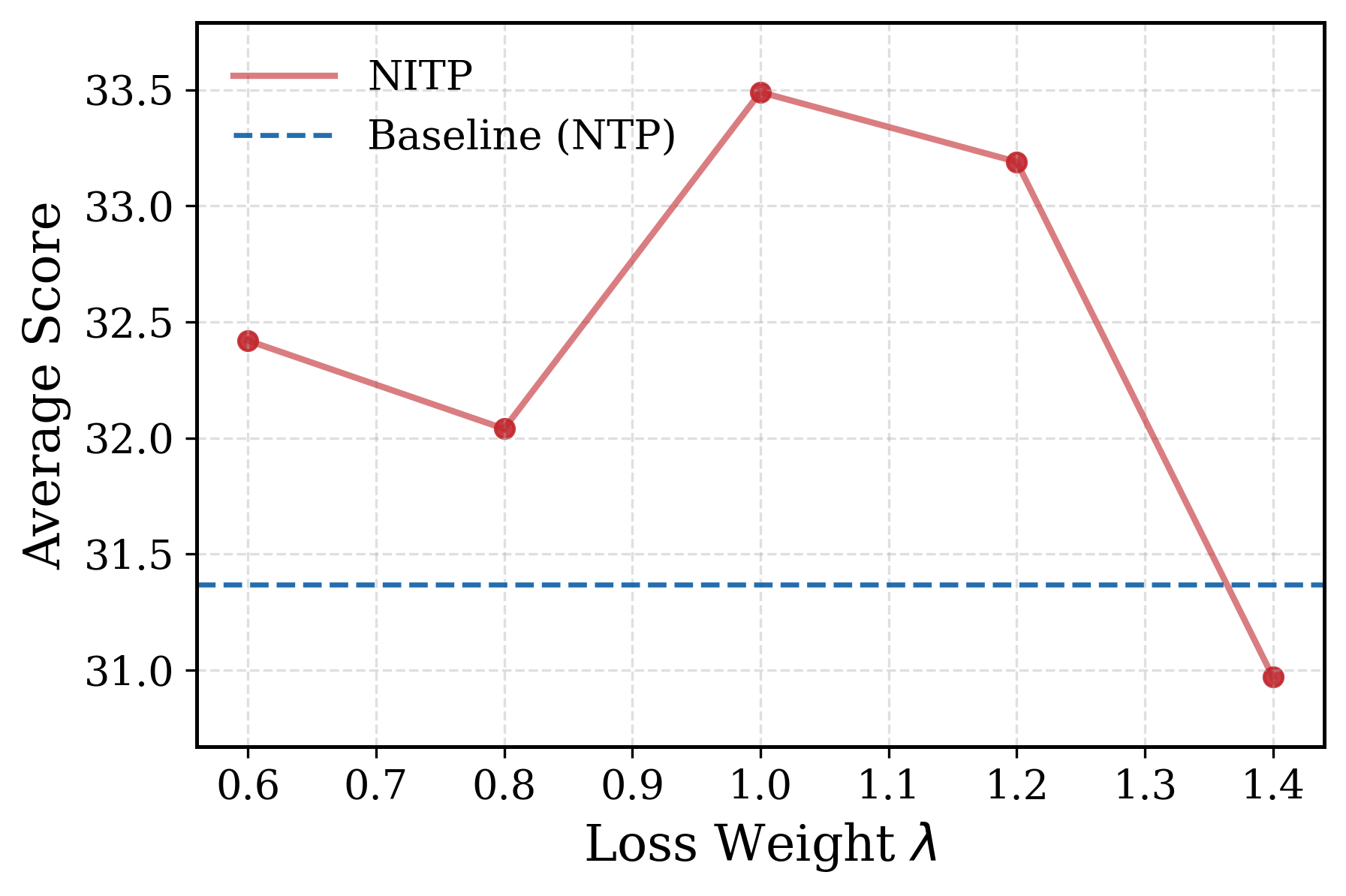}
    
    \caption{Average performance under different NITP loss.}
    \label{fig:weight_ablation_figure}
    \vspace{-10pt}
\end{figure}

\textbf{Importance of temporal shift.}
To distinguish NITP from static layer-wise alignment, we ablate the temporal structure of the implicit prediction objective.
Specifically, we compare predicting the semantic representation of the next token, \emph{i.e.} the next implicit token, with aligning semantic representations at the current time step.
As shown in \Cref{tab:comprehensive_ablation} and \Cref{fig:loss_comparison}, predicting the next implicit token significantly outperforms same-position alignment, while the latter leads to terrible loss and performance degradation. Notably, same-position alignment can achieve a very low alignment loss (\emph{e.g.}, 0.03), yet results in substantially worse downstream performance.
This indicates that simply regularizing representations or minimizing alignment loss is insufficient.
Without the autoregressive temporal shift, \textbf{the objective fails to provide a meaningful predictive signal of the next token and can collapse into trivial solutions.}
These results highlight that the effectiveness of NITP crucially depends on predicting the {future} semantic representation, rather than arbitrarily aligning hidden states across layers.

\textbf{Loss function.} We compare the Cosine Similarity against regression-based losses (MSE, Smooth-$\ell_1$) and KL Divergence in \Cref{tab:comprehensive_ablation}. The results highlight significant stability differences. {MSE} leads to catastrophic optimization divergence that loss is observed to diverge temporarily during training, as its quadratic penalty amplifies the inherent scale mismatch between layers, triggering destructive gradient spikes. Though {Smooth-$\ell_1$} maintains stability via implicit gradient clipping, it still remains unknown for larger-scale training due to its hard alignment form. {KL Divergence} underperforms as it treats continuous feature vectors as distributions, causing geometric distortion. Ultimately, {Cosine Similarity} proves to be the superior choice. Empirically, it is the most stable objective without any divergence issues and consistently delivers superior performance across tasks.

\textbf{Target layer selection.}
A key design choice in NITP is the use of shallow-layer representations as prediction targets (implicit tokens) under the temporal-shifted prediction objective.
To validate this choice, we vary the source layer of the implicit tokens across shallow, middle, and deep layers of the model.
The results in \Cref{tab:comprehensive_ablation} show that using shallow-layer targets consistently outperforms the alternatives.
We attribute this behavior to the fact that shallow layers preserve richer lexical and local semantic structure, whereas deeper layers are more specialized for token discrimination and exhibit worse semantics~\cite{skean2025layerbylayer}.
Among shallow layers, we further find that selecting an early layer around 20\% of the total depth yields the best performance (see \Cref{tab:comprehensive_ablation,tab:target_layer_sweep}), which is consistent with prior findings that semantic richness peaks in specific early layers~\cite{skean2025layerbylayer,liu2024fantasticsemantics} and aligns with our analysis in \Cref{sec:shallow_anchors}.


\textbf{Is NITP merely a hidden state regularization?}
To examine whether NITP reduces to generic hidden-state regularization, we compare it with a cosine-based baseline inspired by~\cite{gao2019representationdegeneration}. Specifically, this baseline applies a cosine similarity penalty between hidden states of different tokens within sequences, encouraging them to be less aligned. While this regularization constrains the geometry of hidden representations, it does not provide a prediction target or token-level supervision. In contrast, NITP introduces a prediction-aligned auxiliary objective by training the final hidden state to predict the next implicit token, which encodes context-dependent information about the upcoming token. As shown in \Cref{tab:comprehensive_ablation}, we find that the cosine regularization baseline yields no improvements, whereas NITP consistently improves performance, indicating that the gains stem from prediction-aligned semantic supervision rather than merely representation regularization.

\textbf{NITP loss weight.}
We study the effect of the NITP loss weight by sweeping $\lambda$ in \Cref{eq:loss_all}.
As shown in \Cref{fig:weight_ablation_figure}, performance consistently peaks around $\lambda = 1.0$, which emerges as the most reliable choice across settings.
While the exact optimum may vary slightly across model configurations, strong performance is consistently observed for values around $1.0$, indicating low sensitivity within this range.
The specific loss weights used for different models are summarized in \Cref{tab:hyperparams_moe}.

\section{Conclusion}


We propose {Next Implicit Token Prediction} (NITP), a novel pre-training objective that augments standard next-token prediction with continuous supervision in the representation space.
Motivated by the observation that NTP leaves hidden representations under-constrained and prone to geometric degeneration, NITP requires the model to predict the implicit semantic representation of the next token using shallow-layer states as self-supervised targets, which we term as {next implicit token}.
This objective explicitly constrains the free degrees of freedom ignored by token-level supervision and prevents representation degeneration.
We provide a theoretical analysis showing that NITP mitigates the optimization null space, thereby regularizing representation geometry.
Experiments across both dense and MoE models demonstrate that NITP consistently improves downstream performance with minimal computational overhead.

\textbf{Limitations.}
NITP introduces additional hyperparameters, including the target layer and NITP loss weight.
While our experiments indicate that these choices are highly stable across models, further validation on more models is required to fully confirm this robustness.

\section*{Acknowledgements}

This work was in part supported by Scientific Research Innovation Capability Support Project for Young Faculty (U40) of the Ministry of Education of China (SRICSPYF-ZY2025019) and Xiaohongshu Inc.

\section*{Impact Statement}

This work introduces a representation-level auxiliary objective for language model pre-training, with the goal of improving optimization stability and the geometric structure of hidden representations.
The proposed method modifies only the training objective and does not involve new data sources, model architectures, or deployment settings.
As a result, the societal and ethical impacts of this work are expected to be aligned with those of existing large language models trained using standard next-token prediction.
Although improved representation quality and training efficiency may lead to better downstream performance, this does not fundamentally change the usage scenarios or risk profile of current language modeling systems.





\bibliography{example_paper}
\bibliographystyle{icml2026}

\newpage
\appendix
\onecolumn

\crefalias{section}{appendix}

\section{Detailed Proofs for NITP Geometry}
\label{app:proofs}

In this section, we provide the detailed derivation of the gradient and Hessian matrix for the Next Implicit Token Prediction (NITP) objective. We further analyze the spectral properties of the Hessian to validate the geometric regularization claims made in Section~\ref{sec:theory}, providing the formal proof for Lemma~\ref{lemma:hessian} and Theorem~\ref{thm:null_space}.

\subsection{Notation and Preliminaries}

Consistent with the setup in Section~\ref{sec:theory}, let $h \in \mathbb{R}^d$ denote the representation used for the implicit prediction loss. We treat $h$ as the projected state to isolate the geometric incentives of the objective function. Let $z \in \mathbb{R}^d$ denote the fixed implicit target derived from shallow layers. We define the following variables:
\begin{align*}
    r &:= \|h\|_2, \quad & u &:= \frac{h}{r}, \\
    v &:= \frac{z}{\|z\|_2}, \quad & s &:= u^\top v = \cos(h, z).
\end{align*}
We assume $r > 0$. The tangential difference vector is defined as $A := v - s u$. Note that $A$ is orthogonal to $u$:
\begin{equation}
    u^\top A = u^\top (v - s u) = u^\top v - s (u^\top u) = s - s = 0.
\end{equation}

The NITP objective function is:
\begin{equation}
    \mathcal{L}(h) = 1 - \cos(h, z) = 1 - \frac{h^\top z}{\|h\|_2 \|z\|_2} = 1 - u^\top v.
\end{equation}

\subsection{Gradient Derivation}

First, recall the Jacobian of the normalization map $u(h) = h/r$:
\begin{equation}
    J_u(h) = \frac{\partial u}{\partial h} = \frac{1}{r} (I - u u^\top).
\end{equation}
Let $f(h) = u^\top v$ be the cosine similarity. By the chain rule:
\begin{align}
    \nabla_h f(h) &= J_u(h)^\top v \nonumber \\
    &= \frac{1}{r} (I - u u^\top) v \nonumber \\
    &= \frac{1}{r} (v - u (u^\top v)) \nonumber \\
    &= \frac{1}{r} (v - s u) = \frac{A}{r}.
\end{align}
Thus, the gradient of the loss $\mathcal{L}(h) = 1 - f(h)$ is:
\begin{equation}
    \nabla_h \mathcal{L}(h) = - \frac{A}{r}.
\end{equation}

\subsection{Hessian Derivation (Proof of Lemma~\ref{lemma:hessian})}

The Hessian is the Jacobian of the gradient vector. We differentiate $\nabla_h \mathcal{L}(h) = -r^{-1} A$ with respect to $h$:
\begin{equation}
    H_{\mathrm{NITP}}(h) = \nabla_h \left( - \frac{A}{r} \right) = - \left( \frac{1}{r} \nabla_h A + A (\nabla_h r^{-1})^\top \right).
\end{equation}
We compute the two terms separately.

\textbf{Term 1: Derivative of $r^{-1}$.}
Using $\nabla_h r = u$:
\begin{equation}
    \nabla_h (r^{-1}) = - \frac{1}{r^2} \nabla_h r = - \frac{u}{r^2}.
\end{equation}
Thus, the second part of the product rule is:
\begin{equation}
    A (\nabla_h r^{-1})^\top = A \left( - \frac{u^\top}{r^2} \right) = - \frac{1}{r^2} A u^\top.
\end{equation}

\textbf{Term 2: Derivative of $A$.}
Recall $A = v - s u$. Since $v$ is constant:
\begin{equation}
    \nabla_h A = - \nabla_h (s u) = - \left( u (\nabla_h s)^\top + s \nabla_h u \right).
\end{equation}
Substituting $\nabla_h s = \nabla_h f = A/r$ and $\nabla_h u = \frac{1}{r}(I - uu^\top)$:
\begin{align}
    \nabla_h A &= - \left( u \left(\frac{A}{r}\right)^\top + s \frac{1}{r} (I - u u^\top) \right) \nonumber \\
    &= - \frac{1}{r} \left( u A^\top + s(I - u u^\top) \right).
\end{align}

\textbf{Combining the Terms.}
Substituting back into the expression for $H_{\mathrm{NITP}}$:
\begin{align}
    H_{\mathrm{NITP}}(h) &= - \left[ \frac{1}{r} \left( - \frac{1}{r} (u A^\top + s(I - uu^\top)) \right) + \left( - \frac{1}{r^2} A u^\top \right) \right] \nonumber \\
    &= \frac{1}{r^2} (u A^\top + s(I - uu^\top)) + \frac{1}{r^2} A u^\top \nonumber \\
    &= \frac{1}{r^2} \left[ s(I - u u^\top) + (u A^\top + A u^\top) \right].
\end{align}
This provides the exact Hessian form. 

\textbf{Approximation near convergence.} 
As the training progresses and the prediction aligns with the target, we have $s \to 1$. Consequently, the difference vector $A = v - su$ approaches zero ($\|A\| \to 0$). Under this condition, the term $(u A^\top + A u^\top)$ vanishes. The Hessian simplifies to:
\begin{equation}
    H_{\mathrm{NITP}}(h) \approx \frac{1}{r^2} (I - u u^\top) = \frac{1}{r^2} P_{\perp u}.
\end{equation}
This confirms the simplified form presented in Lemma~\ref{lemma:hessian}. \hfill $\square$

\subsection{Analysis of Spectral Properties}

We now verify the properties stated in the main text regarding the null space and spectral lifting.

\textbf{Property 1: Radial Null Space.}
We examine the curvature along the radial direction $u$. We compute the quadratic form $u^\top H_{\mathrm{NITP}} u$:
\begin{equation}
    u^\top H_{\mathrm{NITP}} u = \frac{1}{r^2} \left[ s \underbrace{u^\top (I - uu^\top) u}_{0} + \underbrace{u^\top (u A^\top) u}_{u^\top u (A^\top u)} + \underbrace{u^\top (A u^\top) u}_{u^\top A (u^\top u)} \right].
\end{equation}
Since $I - uu^\top$ projects onto the orthogonal complement of $u$, the first term is 0.
For the remaining terms, recall that $A \perp u$, so $A^\top u = 0$ and $u^\top A = 0$.
\begin{equation}
    u^\top H_{\mathrm{NITP}} u = \frac{1}{r^2} [0 + 1 \cdot 0 + 0 \cdot 1] = 0.
\end{equation}
This confirms that the loss function imposes zero curvature (is locally linear or flat) with respect to changes in the vector norm.

\textbf{Property 2: Angular Spectral Lifting.}
Consider a perturbation vector $w \in \mathbb{R}^d$ such that $w \perp u$ (representing a purely semantic change). We compute $w^\top H_{\mathrm{NITP}} w$:
\begin{equation}
    w^\top H_{\mathrm{NITP}} w = \frac{1}{r^2} \left[ s w^\top (I - uu^\top) w + w^\top (u A^\top + A u^\top) w \right].
\end{equation}
Since $w \perp u$:
1. $w^\top (I - uu^\top) w = w^\top w - (w^\top u)^2 = \|w\|_2^2$.
2. $w^\top u A^\top w = (w^\top u) (A^\top w) = 0$.
3. $w^\top A u^\top w = (w^\top A) (u^\top w) = 0$.

Thus, the curvature in the tangential direction is:
\begin{equation}
    w^\top H_{\mathrm{NITP}} w = \frac{s}{r^2} \|w\|_2^2.
\end{equation}
Assuming reasonable alignment during training ($s = \cos(h, z) > 0$), the Hessian is strictly positive definite on the subspace orthogonal to $h$. 

\subsection{Proof of Theorem~\ref{thm:null_space} (Null Space Mitigation)}
\label{app:theorem}

We consider the total loss Hessian $H_{\mathrm{total}} = H_{\mathrm{NTP}} + \lambda H_{\mathrm{NITP}}$.

\textbf{Existence of null space.} As analyzed in \Cref{sec:drawback_of_ntp}, although the NTP objective theoretically involves all vocabulary items via the Softmax normalization, the gradient signal is empirically dominated by the target token and a sparse set of high-probability contenders. Consequently, the curvature of $H_{\mathrm{NTP}}$ is concentrated within this low-dimensional subspace, leaving the vast orthogonal complement effectively unconstrained (rank-deficient).

Based on this, let $\mathcal{N}_{\mathrm{sem}}$ be the \textit{semantic null space} of the standard NTP objective, defined as the set of tangential directions $w \perp u$ that lie in this unconstrained subspace ($w^\top H_{\mathrm{NTP}} w \approx 0$).

For any such degenerate direction $w \in \mathcal{N}_{\mathrm{sem}}$, we compute the quadratic form of the total Hessian:
\begin{align}
    w^\top H_{\mathrm{total}} w &= w^\top (H_{\mathrm{NTP}} + \lambda H_{\mathrm{NITP}}) w \nonumber \\
    &= \underbrace{w^\top H_{\mathrm{NTP}} w}_{\approx 0} + \lambda w^\top H_{\mathrm{NITP}} w.
\end{align}
Substituting the result from Property 2 derived above ($w^\top H_{\mathrm{NITP}} w = \frac{s}{r^2} \|w\|_2^2$):
\begin{equation}
    w^\top H_{\mathrm{total}} w \approx 0 + \lambda \left( \frac{s}{r^2} \|w\|_2^2 \right).
\end{equation}
Under the empirical observation that NITP converges to high alignment ($s \to 1$), we have $s > 0$. Given $\lambda > 0$, it follows that:
\begin{equation}
    w^\top H_{\mathrm{total}} w > 0 \quad \text{for all } w \in \mathcal{N}_{\mathrm{sem}}, w \neq 0.
\end{equation}
This proves that the addition of NITP mitigates the semantic null space by ensuring strictly positive curvature in these previously unconstrained directions. \hfill $\square$

\section{Experiment Details}
\label{sec:appendix_exp_details}

\textbf{Hyperparameter settings.}
\Cref{tab:hyperparams_moe} summarizes the hyperparameter configurations used for pretraining both MoE and Dense models.
Across all model scales, we adopt a largely consistent training recipe to ensure fair comparisons between standard NTP and NITP.
All models are optimized using AdamW with momentum coefficients $(\beta_1, \beta_2) = (0.9, 0.95)$, weight decay set to $0.1$, and global gradient clipping at $1.0$.
We employ a warmup--stable--decay (WSD) learning rate schedule~\cite{wen2024WSD} with 2{,}000 warmup steps and a fixed decay ratio of $0.2$.
The learning rate and global batch size are scaled according to model size, while the context length is fixed to 8192 tokens for all experiments.

\textbf{Model architectures.}
For MoE models, the backbone consists of a shallow dense stage followed by multiple MoE layers, with the number of MoE layers increasing with model capacity.
Each MoE layer contains 144 routed experts and one shared expert, using top-8 routing throughout.
To control the overall parameter budget, the expert feed-forward hidden size is adjusted across scales, while the dense FFN size is kept fixed.
All MoE models use multi-head attention with 8 query heads and 4 key--value heads, with head dimensions scaled proportionally to the hidden size.

Dense models follow a standard Transformer architecture with 24--28 layers depending on model size.
They use a larger number of attention heads and higher-dimensional FFNs to match the overall parameter scale of the MoE counterparts.
For both MoE and Dense models, the NITP target layer is chosen from intermediate depths, and the NITP loss weight is set close to $1.0$, with minor adjustments for larger models. All models are pretrained on large-scale tokenized corpora using the Qwen2 tokenizer.
The total number of training tokens scales with model size and architecture, with each NTP and NITP pair trained under an identical token budget to ensure fair comparison.

For the average downstream performance shown in the bottom row of \Cref{fig:representation-dynamics}, we average over four representative benchmarks: MMLU, C3, C-Eval, and ARC-Challenge.

\begin{table*}[tb!]
    \centering
    \caption{Hyper-parameters for MoE and Dense model pretraining.}
    \label{tab:hyperparams_moe}
    \resizebox{.9\textwidth}{!}{
    \begin{tabular}{ll ccc ccc}
        \toprule
        \multicolumn{2}{c}{\textbf{Parameters}} & \multicolumn{3}{c}{\textbf{MoE Models}} & \multicolumn{3}{c}{\textbf{Dense Models}} \\
        \cmidrule(lr){3-5} \cmidrule(lr){6-8}
        & & \textbf{1.9bA0.3b} & \textbf{3bA0.5b} & \textbf{9bA1b} & \textbf{0.5B} & \textbf{2B} & \textbf{3B} \\
        \midrule
        \multirow{8}{*}{Training} 
            & lr-schedule       & WSD         & WSD         & WSD         & WSD         & WSD         & WSD         \\
            & lr                & 6.0e-4      & 8.7e-4      & 4.2e-4      & 4.2e-4      & 4.2e-4      & 4.2e-4      \\
            & warmup steps      & 2,000       & 2,000       & 2,000       & 2,000       & 2,000       & 2,000       \\
            & decay-ratio       & 0.2         & 0.2         & 0.2         & 0.2        & 0.2        & 0.2        \\
            & AdamW-$\beta$     & (0.9, 0.95) & (0.9, 0.95) & (0.9, 0.95) & (0.9, 0.95) & (0.9, 0.95) & (0.9, 0.95) \\
            & weight-decay      & 0.1         & 0.1         & 0.1         & 0.1         & 0.1         & 0.1         \\
            & grad\_clip        & 1.0         & 1.0         & 1.0         & 1.0         & 1.0         & 1.0         \\
            & global batch size & 256         & 512         & 1024        & 512         & 1024        & 1024        \\
        \midrule
        \multirow{15}{*}{Model}   
            & hidden dim.       & 512         & 768         & 1280        & 896         & 1792        & 2560        \\
            & \#dense layers    & 1           & 1           & 1           & 24          & 28          & 28          \\
            & \#moe layer       & 15          & 16          & 23          & -         & -        & -         \\
            & head dim.         & 64          & 96          & 160         & 64         & 128         & 128         \\
            & \#q heads         & 8           & 8           & 8           & 14          & 14          & 20          \\
            & \#kv heads        & 4           & 4           & 4           & 2           & 2           & 4           \\
            & context-length    & 8192        & 8192        & 8192        & 8192        & 8192        & 8192        \\
            & \#routed experts  & 144         & 144         & 144         & -         & -         & -         \\
            & \#shared experts  & 1           & 1           & 1           & -         & -         & -         \\
            & topK route        & 8           & 8           & 8           & -         & -         & -         \\
            & dense FFN size    & 10944       & 10944       & 10944       & 4864        & 10752       & 10240       \\
            & MoE FFN size      & 496         & 544         & 640         & -         & -         & -         \\
            & shared exp. size  & 496         & 544         & 640         & -         & -         & -         \\
            & NITP target layer & 3/16        & 4/17        & 5/24        & 5/24        & 6/28        & 6/28        \\
            & NITP loss weight  & 1.0         & 1.0         & 0.8         & 1.0         & 1.0         & 0.8         \\
        \midrule
        Data
            & tokenizer         & qwen2       & qwen2       & qwen2       & qwen2       & qwen2       & qwen2       \\
        \bottomrule
    \end{tabular}
    }
    \vspace{-5pt}
\end{table*}

\section{Analysis of Representation Geometry Dynamics}
\label{app:ana_effective_rank_cosine}

We provide a detailed analysis of the geometric phenomena observed in \Cref{fig:representation-dynamics}, focusing on the joint behavior of \emph{effective rank} and \emph{average cosine similarity} during training.

\paragraph{Effective rank dynamics.}
We periodically measure the effective rank of the last-layer hidden states during training by collecting all valid token representations within a mini-batch, applying mean-centering, and computing the entropy-based effective rank from the eigenvalue spectrum of the empirical covariance matrix.
Measurements are reported every fixed number of training steps to track long-term geometric trends.

As shown in \Cref{fig:representation-dynamics}(a), where the background indicates the variance of effective rank, standard NTP exhibits a rapid and monotonic collapse of effective rank, converging to a low-variance regime early in training.
This behavior indicates that the learned hidden states concentrate on a small, fixed subspace, regardless of contextual variation.
Such convergence reflects a degenerate equilibrium: once the model discovers a narrow anisotropic cone that suffices for next-token prediction, no gradient signal encourages re-expansion along semantically meaningful but task-irrelevant directions.

In contrast, NITP maintains a substantially higher effective rank with noticeably larger variance over training.
Importantly, this variance should not be interpreted as optimization instability.
Rather, it is a direct consequence of mitigating the \emph{semantic null space} inherent to NTP.
By explicitly supervising the prediction of the next token in latent space, NITP introduces positive curvature along directions that are otherwise unconstrained under likelihood-based training.
As a result, the model dynamically activates different subsets of the representation space depending on contextual demands, leading to controlled fluctuations in effective rank.

\paragraph{Cosine similarity and anisotropy.}
To quantify global anisotropy, we report the average cosine similarity between randomly sampled pairs of token representations from the last layer.
At each reporting step, a fixed number of token pairs is sampled across sentences, and cosine similarity is computed after $\ell_2$ normalization.

As shown in \Cref{fig:representation-dynamics}(b), under NTP the average cosine similarity increases steadily throughout training, signaling growing anisotropy and alignment toward a common dominant direction.
This behavior is consistent with prior observations of representation degeneration, where embeddings become increasingly indistinguishable despite maintaining predictive accuracy. NITP alleviates this effect.
Although cosine similarity still increases as training progresses, the overall magnitude is consistently lower, indicating that representations preserve greater angular diversity.
This reduced anisotropy aligns with the higher effective rank observed under NITP, jointly suggesting that the representation space remains more expressive and better conditioned.

\paragraph{Implications.}
Taken together, these results clarify that NITP does not merely slow down representation collapse, but fundamentally alters the geometric equilibrium of training.
While NTP permits hidden states to sacrifice expressiveness as long as token prediction remains correct, NITP enforces semantic consistency in latent space, preventing representations from collapsing into a static anisotropic cone.
This geometric regularization provides a mechanistic explanation for the improved downstream generalization observed with NITP.

\section{Hidden-State Representation Quality}
\label{app:repr_quality_lm}

NITP is designed to improve the representations learned during pre-training, rather than merely to change final benchmark scores through the output head. While \Cref{app:ana_effective_rank_cosine} characterizes this effect through representation geometry, we further examine whether the learned hidden states encode richer semantic information than the baseline.

\paragraph{Hidden-state quality on MTEB.}
We evaluate frozen sentence representations on MTEB v2.12.0~\cite{muennighoff2023mteb}. Following standard practice, for each example, we extract the last-layer hidden states from the 3B MoE models, mean-pool over valid tokens, and apply $\ell_2$ normalization. No fine-tuning, contrastive training, or task-specific adaptation is applied, so the evaluation serves as a direct probe of the representation quality induced by the pre-training objective. We evaluate 25 English MTEB tasks and exclude seven large-scale clustering and reranking tasks due to computational cost. According to the task types used in our evaluation pipeline, the remaining tasks are grouped into Classification (10 tasks), STS/Similarity (9 tasks), and Retrieval/Duplicate-detection (6 tasks).

The evaluated tasks are: \emph{Classification}: AmazonCounterfactualClassification, AmazonReviewsClassification, Banking77Classification, EmotionClassification, MTOPDomainClassification, MTOPIntentClassification, MassiveIntentClassification, MassiveScenarioClassification, ToxicConversationsClassification, and TweetSentimentExtractionClassification; \emph{STS/Similarity}: BIOSSES, SICK-R, STS12, STS13, STS14, STS15, STS16, STS17, and STSBenchmark; and \emph{Retrieval/Duplicate-detection}: AskUbuntuDupQuestions, SciDocsRR, SprintDuplicateQuestions, StackOverflowDupQuestions, TwitterSemEval2015, and TwitterURLCorpus.

\begin{table}[h]
\centering
\caption{\textbf{MTEB evaluation of hidden-state quality.} Scores are reported as percentages and macro-averaged over task-level main scores within each group. NITP improves representation utility on 23 out of 25 tasks using frozen last hidden states.}
\label{tab:mteb_results}
\vspace{5pt}
\begin{tabular}{lccc}
\toprule
\textbf{Task Group} & \textbf{NTP} & \textbf{NITP} & \textbf{$\Delta$} \\
\midrule
Classification (10) & 40.09 & 42.29 & +2.20 \\
STS / Similarity (9) & 35.66 & 38.58 & +2.93 \\
Retrieval / Dup. (6) & 43.19 & 44.83 & +1.64 \\
\midrule
\textbf{Overall (25 tasks)} & \textbf{39.24} & \textbf{41.56} & \textbf{+2.33} \\
\bottomrule
\end{tabular}
\vspace{-10pt}
\end{table}

As shown in \Cref{tab:mteb_results}, NITP improves the overall MTEB score from 39.24 to 41.56. The gains are consistent across task families, with improvements of +2.20 points on classification, +2.93 points on semantic textual similarity, and +1.64 points on retrieval/duplicate-detection tasks. At the individual task level, NITP improves 23 out of 25 tasks; the only regressions are small drops on AskUbuntuDupQuestions ($-$0.15 points) and ToxicConversationsClassification ($-$0.88 points). These results indicate that NITP improves the transferability of the hidden states themselves, providing direct evidence for its role as a representation-level pre-training signal.

\paragraph{Language modeling preservation.}
Finally, we verify that the stronger representations do not come from sacrificing the original language modeling objective. On the Pile~\cite{gao2020pile} validation set, NITP achieves nearly identical validation cross-entropy loss to NTP across model configurations: 3B MoE obtains 2.006 vs. 2.006 (PPL 7.43 vs. 7.43), 9B MoE obtains 1.841 vs. 1.840 (PPL 6.30 vs. 6.30), and 3B Dense obtains 1.884 vs. 1.882 (PPL 6.58 vs. 6.57), where each pair reports NTP vs. NITP. This suggests that token-level cross-entropy alone does not fully characterize the quality of the learned hidden states: models with matched next-token likelihood can still differ substantially in representation geometry and transferability, consistent with our core claim. NITP therefore preserves validation loss under the original next-token prediction objective while learning hidden-state representations with better semantic transferability.

\begin{figure}
    \centering
    \includegraphics[width=0.45\linewidth]{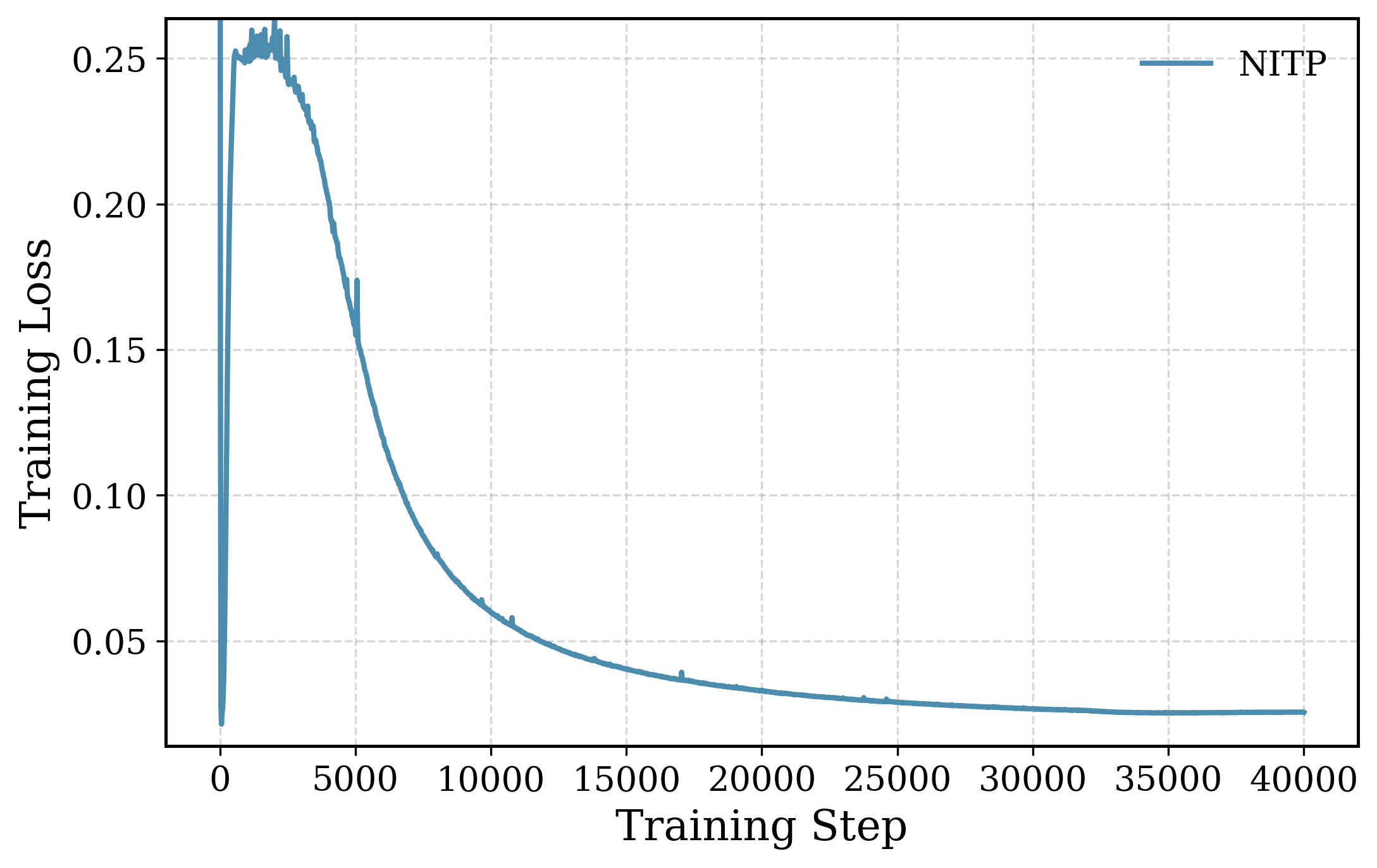}
    \caption{\textbf{Training dynamics of the NITP loss.}
    Evolution of $\mathcal{L}_{\text{NITP}}$ during pre-training, exhibiting a characteristic three-phase behavior: an initial collapse induced by random initialization, a transient hump caused by the emergence of structured shallow-layer targets, and a long-term stable convergence.}
    \label{fig:nitp_loss_curve}
    \vspace{-10pt}
\end{figure}
\section{Loss Curve of NITP}
\label{sec:loss_landscape}
To further understand the optimization dynamics of Next Implicit Token Prediction, we monitor the trajectory of $\mathcal{L}_{\text{NITP}}$ throughout pre-training (\Cref{fig:nitp_loss_curve}). The loss curve exhibits a characteristic three-phase evolution:

\begin{itemize}
    \item \textbf{Phase I: random-initialization collapse.} 
    In the very early stage of training, the NITP loss drops sharply from its initial value within the first few dozen steps. This behavior is primarily induced by random initialization: both the predictive state $h_t$ and the implicit target $z_{t+1}$ are nearly isotropic and weakly differentiated across tokens, causing their cosine similarity to rapidly concentrate around a common value. As a result, the NITP objective collapses quickly before any meaningful representation learning takes place.

    \item \textbf{Phase II: emergence of structured targets.} Following the initial drop, the loss exhibits a transient increase, forming a local peak (around 40–2k steps, \emph{i.e.} warm-up phase). This behavior is consistent with the bottom-up convergence pattern observed in language models~\citep{skean2025layerbylayer}. As shallow layers converge faster, they begin to extract more structured and contextualized representations, transforming the implicit target $z_{t+1}$ from a near-constant signal into a more informative prediction target. This transition temporarily increases the difficulty of the NITP objective for the deeper predictive state $h_t$.
    
    \item \textbf{Phase III: steady convergence.} After the peak, the loss enters a prolonged and stable decline, indicating that deeper layers progressively adapt to the representational manifold defined by shallow layers. The final convergence to a low loss value (\emph{e.g.}, below 0.05 for our 9B model) is consistent with NITP providing a stable and learnable auxiliary supervision signal throughout pre-training.
\end{itemize}

Overall, the observed hump in the loss curve suggests that the NITP objective functions not merely as a static regularizer, but as an adaptive predictive task whose difficulty evolves alongside the model’s internal representations.

\section{Training FLOPs Analysis}
\label{append:flops}
This section provides a detailed analysis of the training FLOPs introduced by NITP, in comparison with the baseline Next-Token Prediction (NTP) objective.
Following common practice~\cite{kaplan2020scaling,hoffmann2022training}, we approximate total training FLOPs as approximately $6\times$ the number of parameters involved in forward computation, accounting for forward, backward, and parameter gradient updates.

\subsection{Baseline NTP FLOPs}

We consider a transformer-based MoE model with model hidden size $d$ and vocabulary size $V$. Each layer consists of a self-attention module and a MoE block. Additionally, the unembedding layer projects the final hidden state to the vocabulary space. For each token:

\begin{itemize}
    \item \textbf{Self-attention (GQA)~\cite{ainslie2023gqa}:} Adopting Grouped Query Attention (GQA) with 8 query heads and 4 KV heads, the parameter count is reduced compared to standard Multi-Head Attention. The projections ($W_Q, W_K, W_V, W_O$) contribute $\approx 3d^2$ parameters (where $W_K$ and $W_V$ are halved), resulting in $\approx 18d^2$ training FLOPs.
    
    \item \textbf{MoE MLP (SwiGLU~\cite{shazeer2020glu}):} Each expert uses three projections ($d \times d_e$). For $k$ activated experts, the training FLOPs are $\approx 6 \times (3 k d d_e) = 18 k d d_e$.
    
    \item \textbf{Unembedding Layer:} The final projection to the large vocabulary space ($d \times V$) contributes significant computation. The training FLOPs are $\approx 6 V d$. Note that the input embedding layer is excluded as it involves only lookup operations (0 FLOPs).
\end{itemize}

The total baseline training FLOPs per token (summing over $L$ layers and the output head) are:
\begin{equation}
\mathrm{FLOPs}_{\mathrm{NTP}} \approx L \times (18d^2 + 18 k d d_e) + 6 V d.
\end{equation}

\subsection{Additional FLOPs Introduced by NITP}

NITP introduces a projection head and a cosine loss. Crucially, extracting the target $z_{t+1}$ incurs no cost due to the stop-gradient.

\paragraph{Projection head.}
We employ a SwiGLU projection head with an intermediate dimension of $4d$. It consists of three linear transformations ($d \to 4d$, $d \to 4d$, and $4d \to d$), resulting in $12d^2$ parameters. The training FLOPs are:
\begin{equation}
\mathrm{FLOPs}_{\mathrm{proj}} \approx 6 \times 12d^2 = 72d^2.
\end{equation}

\paragraph{Cosine similarity loss.}
The computation involves dot products and norms ($\approx 6d$ operations). Including the backward pass:
\begin{equation}
\mathrm{FLOPs}_{\mathrm{cos}} \approx 3 \times 6d = 18d.
\end{equation}

\subsection{Numerical Instantiation: 9B MoE Model}

Using the 9B MoE model parameters specified in \Cref{tab:hyperparams_moe} ($d=1280$, $d_e=640$, $k=9$, $L=24$, $V=152064$), we estimate the FLOPs as follows:
\begin{itemize}
    \item \textbf{NITP Overhead (Total):} $72 d^2 + 18d \approx 72 \times 1280^2 \approx 1.18 \times 10^8$ FLOPs.
    
    \item \textbf{Baseline (Backbone Layers):} $24 \times (18 \times 1280^2 + 18 \times 9 \times 1280 \times 640) \approx 24 \times 1.62 \times 10^8 \approx 3.89 \times 10^9$ FLOPs.
    
    \item \textbf{Baseline (Unembedding):} $6 \times 152064 \times 1280 \approx 1.17 \times 10^9$ FLOPs.
    
    \item \textbf{Baseline (Total):} $3.89 \times 10^9 + 1.17 \times 10^9 \approx 5.06 \times 10^9$ FLOPs.
\end{itemize}

The resulting relative training overhead is:
\begin{equation}
\frac{\mathrm{FLOPs}_{\mathrm{NITP}}}{\mathrm{FLOPs}_{\mathrm{NTP}}} \approx \frac{1.18 \times 10^8}{5.06 \times 10^9} \approx 2.3\%.
\end{equation}

\subsection{Empirical Wall-Clock Overhead}

We also measure the practical training-time overhead in the same 9B MoE setting. Using the same number of GPUs and a global batch size of 1024, we compare NTP and NITP over a 5k-step run. The NTP baseline takes 16h 1m, while NITP takes 16h 18m, corresponding to an additional 17 minutes, or approximately 1.8\% wall-clock overhead.

This empirical overhead is consistent with the FLOPs estimate above. The measured wall-clock increase is slightly smaller than the theoretical FLOPs ratio because large-scale MoE training also involves communication and system overheads that are largely unaffected by the NITP projection head and cosine loss.

\paragraph{Conclusion.}
The analysis confirms that the overhead introduced by NITP remains marginal ($\sim 2\%$), ensuring scalability. 

\vspace{-5pt}
\section{Inference FLOPs Analysis}
\vspace{-5pt}

\textbf{Crucially, NITP introduces zero additional computational overhead during inference.} Since the NITP objective serves as an auxiliary supervision signal, the projection head is discarded after pre-training. The model architecture used for deployment remains identical to the standard transformer backbone. Therefore, the inference FLOPs per token are exactly the same as the baseline NTP model, ensuring that NITP improves model performance without compromising generation speed or increasing serving costs.

\section{Additional Ablation Studies}

\begin{table}[t]
\centering
\caption{\textbf{Target-layer sweep across scales.}
We report the average score over MMLU, MMLU-Pro, CSQA, BBH, and LCBench.
The best target layer consistently lies in a shallow contextualized region around 20\% of the total depth.}
\label{tab:target_layer_sweep}
\begin{tabular}{lcc}
\toprule
\textbf{Model / Setting} & \textbf{Target layer} & \textbf{Avg.} \\
\midrule
\multicolumn{3}{l}{\textbf{3B MoE (17 layers)}} \\
\quad NTP baseline & -- & 21.10 \\
\quad Embedding target & 0 & 20.72 \\
\quad Candidate target & 2 & 22.01 \\
\quad \textbf{Selected target} & \textbf{4} & \textbf{23.58} \\
\quad Candidate target & 6 & 22.12 \\
\quad Candidate target & 8 & 21.22 \\
\midrule
\multicolumn{3}{l}{\textbf{9B MoE (24 layers)}} \\
\quad NTP baseline & -- & 27.95 \\
\quad Candidate target & 4 & 30.05 \\
\quad \textbf{Selected target} & \textbf{5} & \textbf{30.75} \\
\quad Candidate target & 6 & 29.80 \\
\bottomrule
\end{tabular}
\end{table}

\textbf{Target-layer sweep.}
We further study the choice of the layer used to construct the implicit target.
As shown in \Cref{tab:target_layer_sweep}, the best target consistently appears in a shallow but contextualized region: layer 4 out of 17 for the 3B MoE model and layer 5 out of 24 for the 9B MoE model, both close to 20\% of the total depth.
Using the embedding layer as the target is weaker, suggesting that purely lexical embeddings lack sufficient contextual information.
Moving the target deeper also reduces the gain, likely because deeper representations become increasingly specialized for token discrimination and provide less complementary semantic supervision.

\begin{table*}[t!]
\centering
\caption{\textbf{Additional ablations for NITP design choices.} 
We report results on key reasoning benchmarks. 
\textbf{Avg.} denotes the arithmetic mean of MMLU, MMLU-Pro, CSQA, BBH, and LCBench.}
\label{tab:appendix_ablation}
\resizebox{0.7\textwidth}{!}{
\begin{tabular}{ll cccccc}
\toprule
\textbf{Ablation Factor} & \textbf{Setting} & \textbf{MMLU} & \textbf{MMLU-Pro} & \textbf{CSQA} & \textbf{BBH} & \textbf{LCBench} & \textbf{Avg.} \\
\midrule
\multicolumn{2}{l}{{Baseline: NTP}} & 34.60 & 11.00 & 34.15 & 21.92 & 3.82 & 21.10 \\
\midrule

\multirow{2}{*}{Stop Gradient}
& Disabled 
& 31.89 & 8.43 & 27.51 & 21.25 & 2.09 & 18.23 \\
& \textbf{Enabled} 
& \textbf{37.37} & \textbf{12.29} & \textbf{37.92} & \textbf{26.14} & \textbf{4.17} & \textbf{23.58} \\
\midrule

\multirow{4}{*}{NITP Start Step}
& 3000
& 35.54 & 10.14 & 31.85 & 21.25 & 2.09 & 20.17 \\
& 2000
& 36.17 & 10.28 & 34.72 & 23.03 & 2.78 & 21.40 \\
& 1000
& 35.93 & 11.57 & 31.22 & 23.40 & 4.00 & 21.22 \\
& \textbf{0}
& \textbf{37.37} & \textbf{12.29} & \textbf{37.92} & \textbf{26.14} & \textbf{4.17} & \textbf{23.58} \\

\midrule

\multirow{2}{*}{Projector} 
& $\times$ & 35.40 & 11.71 & 29.48 & 24.00 & 2.61 & 20.64 \\
& $\mathbf{\checkmark}$ & \textbf{37.37} & \textbf{12.29} & \textbf{37.92} & \textbf{26.14} & \textbf{4.17} & \textbf{23.58} \\

\bottomrule
\end{tabular}
}
\vspace{-5pt}
\end{table*}

\textbf{Stop-gradient on implicit targets.}
We ablate whether gradients are allowed to propagate into the implicit target representations in \Cref{eq:stop_gradient}.
Allowing gradients to flow into the targets leads to co-adaptation between the predicted and target representations, which frequently results in unstable optimization and degraded performance.
In contrast, applying a stop-gradient operation consistently stabilizes training and yields substantial improvements across downstream benchmarks.
The results in \Cref{tab:appendix_ablation} highlight that treating the implicit target as a fixed semantic anchor is critical for effective latent-space prediction, whereas disabling stop-gradient performs significantly worse, undermining the supervisory signal by allowing the target to drift during training. Accordingly, we adopt stop-gradient on implicit targets in all experiments.

\textbf{Start step of NITP.}
Motivated by the three-phase dynamics observed in the NITP loss curve, we study when NITP supervision should be activated during training.
We compare enabling NITP from the beginning with introducing it at later stages.
As shown in Table~\ref{tab:appendix_ablation}, delaying NITP activation leads to consistent performance degradation, particularly on reasoning-oriented benchmarks, while applying NITP from step 0 yields the best overall results.
These results indicate that NITP is most effective when applied early, where it can shape representation geometry before higher-level semantics stabilize, thereby promoting more coherent alignment between shallow and deep representations throughout training.

\textbf{Effect of the projection head.}
We study the role of the projection head in NITP by comparing our default design, which applies a two-layer MLP projector using SwiGLU on the last hidden states, with a projector-free variant that directly aligns the last hidden states with the implicit targets. Removing the projector consistently results in notable performance degradation across all benchmarks as shown in \Cref{tab:appendix_ablation}. We attribute this performance drop to a distributional mismatch between deep representations and shallow implicit targets. Directly enforcing alignment imposes an overly restrictive constraint, while a lightweight MLP projection head provides sufficient flexibility to bridge this mismatch without limiting the backbone capacity.



\end{document}